\newcommand{\cmark}{\ding{51}}%
\newcommand{\xmark}{\ding{55}}%
\definecolor{darkgray2}{rgb}{0.36, 0.36, 0.36}
\definecolor{LightCyan}{rgb}{0.8,0.9,0.8}
\definecolor{LightRed}{rgb}{1,0.75,0.75}
\definecolor{teal}{rgb}{0.98, 0.75, 0}
\definecolor{Gray}{gray}{0.93}
\definecolor{mintbg}{rgb}{.63,.79,.95}
\definecolor{bluebell}{rgb}{0.64, 0.64, 0.82}
\definecolor{lightcyan}{rgb}{0.88, 1.0, 1.0}
\definecolor{lightmauve}{rgb}{0.86, 0.82, 1.0}
\definecolor{cgreen}{rgb}{0.0, 0.8, 0.6}
\definecolor{darkseagreen}{rgb}{0.56, 0.74, 0.56}
\definecolor{darkpastelred}{rgb}{0.76, 0.23, 0.13}
\definecolor{applegreen}{rgb}{0.55, 0.71, 0.0}
\definecolor{ao}{rgb}{0.0, 0.5, 0.0}
\definecolor{royalblue}{rgb}{0.25, 0.41, 0.88}
\newcommand{\light}[1]{\textcolor{darkgray2}{\small{#1}}}
\theoremstyle{plain}
\newtheorem{theorem}{Theorem}[section]
\newtheorem{lemma}[theorem]{Lemma}
\theoremstyle{definition}
\newtheorem{definition}[theorem]{Definition}
\theoremstyle{remark}
\title{Fast Tree-Field Integrators: From Low Displacement Rank to Topological Transformers}
\author{\hspace{-1.mm}
Krzysztof Choromanski\textsuperscript{$1, 2$ \thanks{equal contribution}}, Arijit Sehanobish\textsuperscript{$3$, $^{*}$}, Somnath Basu Roy Chowdhury\textsuperscript{$4$, $^{*}$}, \\
\textbf{Han Lin\textsuperscript{$4$, $^{*}$}, Avinava Dubey\textsuperscript{$5$, $^{*}$}, Tamas Sarlos\textsuperscript{$5$},
Snigdha Chaturvedi\textsuperscript{$4$}
}
\vspace{1.3mm}\\
\normalfont
\textsuperscript{$1$} Google DeepMind, 
\textsuperscript{$2$} Columbia University,
\textsuperscript{$3$} Independent,
\textsuperscript{$4$} UNC Chapel Hill, 
\textsuperscript{$5$} Google Research.
}
\begin{document}

\maketitle

\begin{abstract}
We present a new class of fast polylog-linear algorithms based on the theory of structured matrices (in particular \textit{low displacement rank}) for integrating tensor fields defined on weighted trees. Several applications of the resulting \textit{fast tree-field integrators} (FTFIs) are presented, including (a) approximation of graph metrics with tree metrics, (b) graph classification, (c) modeling on meshes, and finally (d) \textit{Topological Transformers} (TTs) \citep{topvit} for images. For Topological Transformers, we propose new relative position encoding (RPE) masking mechanisms with as few as \textbf{three} extra learnable parameters per Transformer layer, leading to \textbf{1.0-1.5\%+} accuracy gains. Importantly, most of FTFIs are \textbf{exact} methods, thus numerically equivalent to their brute-force counterparts. When applied to graphs with thousands of nodes, those exact algorithms provide \textbf{5.7-13x} speedups. We also provide an extensive theoretical analysis of our methods.
\end{abstract}

\section{Introduction}
\label{sec:intro_related}
Matrix-vector multiplication remains a key computational block of virtually all modern machine learning (ML) algorithms.
For this reason, decades of research have been dedicated towards making this fundamental operation more efficient.
One approach to achieve this goal is through efficient hardware design, e.g., using modern GPU and TPU accelerators~\citep{tpus, gpus-1, gpu-2}. 
The alternative method involves developing algorithms for efficient matrix-vector multiplication by leveraging either {(1)} sparse matrices~\citep{sparse-1, beniamini2020sparsifying}, or {(2)} structured dense matrices~\citep{ldr-1, ldr-2}. 
These algorithms can be applied in modern neural network systems, where weights are pruned to encourage sparsity~\citep{pruning-1} or they can be parameterized with structured matrices~\citep{sindhwani-ldr}.

In this work, we aim to accelerate multiplications with a
large class of matrices, that we refer to as $f$\textit{-distance matrices}, which play an important role in several ML algorithms.
Consider a matrix $\mathbf{M}^{\mathrm{G}}_{f}=[f(\mathrm{dist}(i,j))]_{i,j=1,...,N} \in \mathbb{R}^{N \times N}$, where $\mathrm{dist}(i,j)$ stands for the shortest-path distance between the $i$-th and $j$-th vertex of an undirected graph $\mathrm{G}=(\mathrm{V},\mathrm{E},\mathrm{W})$. 
Here $\mathrm{V}=\{1,...,N\}$ stands for the set of vertices (nodes), $\mathrm{E}$ denotes the set of edges, $\mathrm{W}:\mathrm{E} \rightarrow \mathbb{R}_{+}$ maps them to their positive weights, 
and $f:\mathbb{R} \rightarrow \mathbb{R}$. 
We call  $\mathbf{M}^{\mathrm{G}}_{f}$ a \textit{$f$-distance matrix in $\mathrm{G}$}. Note that if $f(x)\overset{\mathrm{def}}{=}x$, then $\mathbf{M}^{\mathrm{G}}_{f}$ is the Shortest Path Kernel matrix.

The product $\mathbf{M}^{\mathrm{G}}_{f}\mathbf{x}$ (where $\mathbf{x} \in \mathbb{R}^{N}$) represents a scalar field on $\mathrm{V}$ obtained by discretely integrating the field defined by $\mathbf{x}$. In this integration, a new field value at a vertex $v$ is calculated by averaging the old field values at all vertices $u$, weighted according to the function  $f(\mathrm{dist}(v, u))$. This integration can be extended to general tensor fields by replacing vector $\mathbf{x} \in \mathbb{R}^{N}$ with a tensor $\mathbf{X} \in \mathbb{R}^{N \times d_{1} \times d_{2} \times ...}$:
\begin{equation}
\mathbf{M}^{\mathrm{G}}_{f}\mathbf{X}[i] = \sum_{j \in \mathrm{V}(\mathrm{G})}f(\mathrm{dist}(i,j))\mathbf{X}[j]\label{eq:gfi}
\end{equation}
We refer to the above procedure as the $f$-integration of a field $\mathbf{X}$ on $\mathrm{G}$. We will use the terms $\textit{graph field integration}$ (GFI) and \textit{multiplication with $f$-distance matrices} interchangeably throughout the paper. When the graph, $\mathrm{G}$, is a tree, we call this procedure (Eq.~\ref{eq:gfi}) \textit{tree field integration}. 
Next, we highlight several applications that rely on multiplications with $f$-distance matrices, $\mathbf{M}^{\mathrm{G}}_{f}$.
\begin{enumerate}[topsep=0pt, leftmargin=*, noitemsep]
\itemsep1mm
\item \textbf{Interpolation on manifolds:} This task involves predicting unseen values on a manifold from a set of known values. For example, predicting the velocities of all points on a flag with known velocities for a few points~\citep{meshgraphnet}. For a discretized manifold, the interpolated values can be obtained using a weighted average using graph field integration (Eq.~\ref{eq:gfi}). 
\item \textbf{Optimal Transport (OT):} A popular method used to solve the entropic OT problem~\citep{ot} is the Sinkhorn algorithm \citep{sinkhorn}. Sinkhorn relies on multiplications with \textit{cost matrices}, which are special cases of $f$-distance matrices for metric spaces induced by shortest-path distances in graphs. This can be efficiently solved using graph field integration.

\item \textbf{Topological Transformers (TTs):} Topological Transformers~\citep{topvit} are extensions of traditional Transformers \citep{transformers-1} for graph inputs. TTs modify the 1-D relative positional encoding (RPE) using ``mask matrices", which are $f$-distance matrices. We show how these matrices can be efficiently integrated into the attention mechanism (Sec.~\ref{sec:exp_tts}).  
\end{enumerate}

In the above applications, apart from the graph field integration step, the bottleneck lies in the process of explicitly materializing the $f$-distance matrix. Naively performing the integration in Eq~\ref{eq:gfi} consists of two steps: \textbf{(a)} computing the $f$-distance matrix, $\mathbf{M}^{\mathrm{G}}_{f}$, which requires $O(N^3)$ time in the worst case (which we call \textit{preprocessing}), and \textbf{(b)} performing the multiplication takes $O(N^2)$ time. This is prohibitively expensive while using large graphs.

In this paper, we introduce a new class of fast polylog-linear algorithms for graph field integration that uses  low displacement rank (LDR) matrices~\citep{ldr-1, ldr-2}. To summarize, our primary contributions are given below:

\begin{enumerate}[topsep=0pt, leftmargin=*, noitemsep]
\itemsep1mm
\item We provide the first \textbf{exact} polylog-linear multiplication algorithms called \textbf{Fast Tree-Field Integrators} (FTFIs), for general weighted trees and 
a rich class of maps $f$, including rational, trigonometric, exponential and exponentiated quadratic functions (Sec. \ref{sec:int-it}). 
\item We show how Fast Tree-Field Integrators can be applied to support fast computations on general graphs by approximating graph metrics with tree metrics (Sec. \ref{sec:exp}).
\item We show that FTFIs are \textbf{5.7-10x} faster than baseline graph field integration methods for large-scale  graphs (Sec. \ref{sec:exp_speed} and  \ref{sec:interpolation_meshes}).
\item We showcase the efficacy of FTFIs in several applications including graph classification (Sec. \ref{sec:graph}), interpolation on meshes (Sec.~\ref{sec:interpolation_meshes}), and Topological Vision Transformers (TVTs) (Sec. \ref{sec:exp_tts}). For TVTs, we propose new relative position encoding (RPE) masking mechanisms by introducing only \textbf{three} extra learnable parameters, which leads to \textbf{1.0-1.5\%} accuracy gains. We provide an exhaustive evaluation on Vision Performers (\textbf{25} models on multiple datasets). {Some of our best models use exponentiated quadratic functions $f$, which has not been applied in this context before.

}

\end{enumerate}

For completeness, we also propose approximate FTFI extensions via \textit{Non-Uniform FFT} (NU-FFT) \citep{nu-fft-1} and random Fourier features (RFFs) \citep{rffs-1} (Sec. \ref{sec:approx}).

\section{Related work}
\label{sec:intro_related_2}

Efficient graph field integration (Eq.~\ref{eq:gfi}) has been studied by prior works for different classes of matrices. For example, \citet{mohy} considered exponentiated adjacency matrix-vector multiplication, \citet{spielman2012nearlylinear} targeted symmetric diagonally dominant matrices (e.g., Laplacian), \citet{ARRIGO2018381} analyzed matrices that are power series of random walk kernels. In contrast to these approaches, \cite{gmres} proposed general iterative methods for solving certain linear systems using Arnoldi's iterations. However, These iterative methods can suffer from convergence issues.
\citet{ryan_williams} showed that it is possible to pre-process any boolean matrix to achieve sub-quadratic matrix-vector multiplication.

The general problem of computing the action of a matrix on a vector, where the matrix is the graph kernel, in sub-quadratic time is intractable, except for a few special cases~\citep{mohy, pcchoro}. In this work, we embed the graph $\mathrm{G}$ under consideration in a tree (replacing the graph metric by the underlying \textit{tree metric}). Then, we leverage the tree structure to approximate the action of the kernel on a given vector by providing \textbf{exact} integration on a tree. 

Previous works~\citep{tree-1, tree-2, tree-3, tree-4} have used the theory of \textit{tree metrics} (TMs) in several applications in mathematics and computer science.
TMs are widely used to embed a complex metric space (e.g., a Riemannian manifold) into a more tractable one, while approximately preserving (all or most of the) pairwise distances. They find applications in distributed \& online algorithms \citep{khan, kserver}, biology \citep{mossel}, vision, robotics \citep{athitsos}, and ML (e.g., metric spaces' regression \citep{regression-ms}). 

\paragraph{Tree metrics for fast matrix multiplication:} 
Applying tree metrics (TM) to compute approximate $\mathbf{M}^{\mathrm{G}}_{f}$ is a natural approach to scale up matrix multiplications.
If a TM approximates the metric space well, then the derived embeddings should have low distortion.  However, in the worst-case scenario, this is not true for deterministic \textit{tree embeddings}. 
A natural alternative is to sample trees from probabilistic distributions, which are shown to provide logarithmic distortion in expectation \citep{tree-tight, tree-1}.
This can be further improved to constant distortion for certain classes of metrics, e.g., celebrated \textit{snowflake metics} \citep{leeb}. For graph metrics defined by shortest-path distances, there exist spanning trees providing constant average distortion (over all pairs of nodes). These spanning trees can be constructed as \textit{near minimum weight spanning trees} \citep{near-mst}.
Unfortunately, explicit application of \textit{any} tree metric still requires $O(N^{2})$ time (impractical for large $N$) to: \textbf{(1)} compute all shortest-path distances via the breadth-first-search algorithm (BFS), even if sub-quadratic methods were used to construct a tree (e.g. minimum spanning tree), \textbf{(2)} store the matrix, and \textbf{(3)} perform matrix-vector multiplications. We provide more details about work related to graph field integration in Appendix~\ref{sec:rel_work_appendix}.

\begin{figure*}[t!]
    \begin{center}
    \includegraphics[width=.9\linewidth]{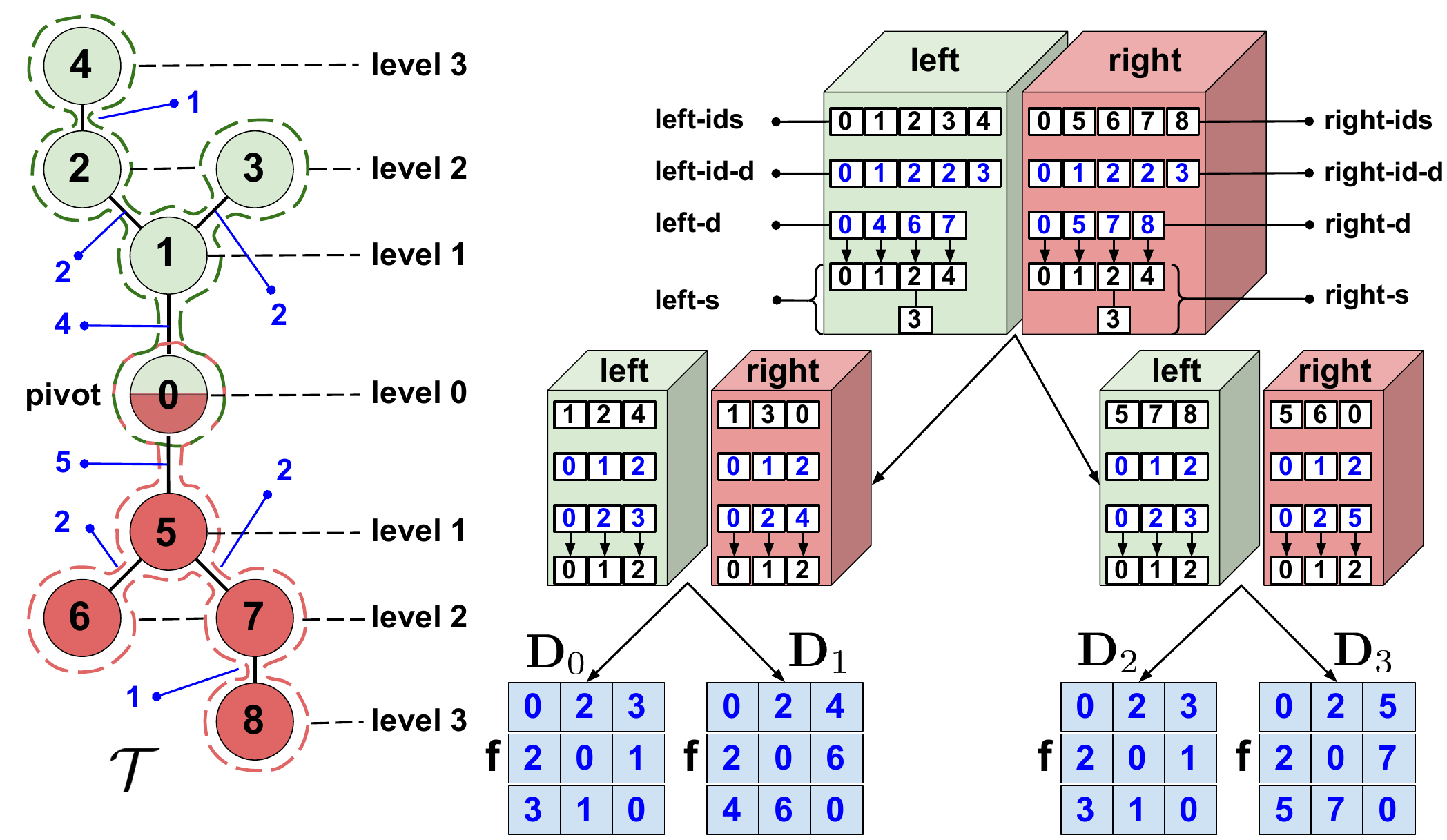}
    \caption{\small{Pictorial representation of the \textrm{IntegratorTree} (see: Sec \ref{sec:it}) data structure for the nine-vertex input tree $\mathcal{T}$ on the left. Numbers in blue next to the input tree denote the weights of its edges. Leaves of the \textrm{IntegratorTree} object represent $f$-transformed (element-wise) distance matrices: $\mathbf{D}_{0},\mathbf{D}_{1},\mathbf{D}_{2},\mathbf{D}_{3}$ for sub-trees induced by vertex-sets: $\{1,2,4\},\{1,3,0\},\{5,7,8\}$ and \{5,6,0\} respectively. Different \textit{levels} correspond to different distances from the pivot point.}}\label{fig:IntTree}
    \end{center}
\label{fig:scheme}
\vspace{-7.5mm}
\end{figure*}

\section{Fast Tree-Field Integrators (FTFI)}
In this section, we present our approach for performing efficient field integration on a tree, which we call  \textit{fast tree field integrator}. We begin by introducing the concept of integrator trees (ITs), which is a specialized decomposition of a tree using the theory of \textit{balanced separators} (Sec~\ref{sec:it}). Subsequently, we leverage these integrator trees to execute efficient integration on a tree via a \textit{divide-and-conquer algorithm} (Sec~\ref{sec:int-it}).
\label{sec:ftfi}
\subsection{IntegratorTrees (ITs) - preliminaries}
\label{sec:it}
To support fast integration for various tensor fields $\mathbf{X} 
\in \mathbb{R}^{N \times d_{1} \times ... \times d_{s}}$ defined on a given input tree $\mathcal{T}$, we first design a special data structure that we refer to as an \textit{IntegratorTree} (IT). An object of this type is constructed only once per $\mathcal{T}$, regardless of the number of tensor fields used.
An IT is a rooted binary tree. To avoid confusion, we will refer to its vertices as \textit{nodes}, reserving term $\textit{vertices}$ for those of $\mathcal{T}$. Each node of IT corresponds to the induced sub-tree $\mathcal{S}\mathcal{T}$ of $\mathcal{T}$. For every non-leaf node corresponding to some $\mathcal{S}\mathcal{T}$, a \textit{pivot} point $p$ along with two sub-trees: $\mathcal{S}\mathcal{T}_{\mathrm{left}}$ and $\mathcal{S}\mathcal{T}_{\mathrm{right}}$  are constructed. The following needs to be satisfied: 
\begin{itemize}
\item $|\mathcal{ST}_{x}| \geq \frac{|\mathcal{S}\mathcal{T}|}{4}$    
for $x \in \{\mathrm{left}, \mathrm{right}\}$,
\item $\mathcal{ST}_{x} \cap \mathcal{ST}_{y} = \{p\}$ ($|\cdot|$ denotes the number of vertices).
\end{itemize}
The next lemma shows that every tree $\mathcal{K}$ with $|\mathrm{\mathcal{K}}| \geq 6$ has the above decomposition and it can be efficiently found.
\begin{lemma}[\textbf{Pivoting}]
\label{thm:pivot_lemma}
If $\mathcal{K}$ is a tree with $|\mathcal{K}| \geq 6$, then $\mathcal{K}$ admits a decomposition
($\mathcal{K}_{\mathrm{left}}, \mathcal{K}_{\mathrm{right}}, p)$ given above and it can be constructed in \textbf{linear} time. 
\end{lemma}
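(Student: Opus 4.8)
The plan is to build the decomposition around a \emph{centroid} of $\mathcal{K}$, i.e.\ a vertex $p$ whose removal breaks $\mathcal{K}$ into branches each containing at most $n/2$ vertices, where $n = |\mathcal{K}|$. First I would root $\mathcal{K}$ at an arbitrary vertex and compute, in one depth-first traversal, the subtree size $\mathrm{sz}(v)$ of every vertex, which costs $O(n)$. To locate a centroid I would start at the root and repeatedly descend into the unique child whose subtree has size exceeding $n/2$ (there can be at most one such child), stopping at the vertex $p$ where no such child exists. A standard argument then shows $p$ is a centroid: every downward branch has size $\le n/2$ by the stopping rule, while the single upward branch has size $n - \mathrm{sz}(p) < n/2$ because we descended into $p$ only when $\mathrm{sz}(p) > n/2$. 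This search visits each vertex once, so it too is linear.

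Removing $p$ leaves branches $B_1,\dots,B_k$ with sizes $b_i = |B_i|$, satisfying $\sum_i b_i = n-1$ and each $b_i \le n/2$. It remains to split the branches into two groups $L$ and $R$ and set $\mathcal{K}_{\mathrm{left}} = \{p\}\cup\bigcup_{i\in L} B_i$ and $\mathcal{K}_{\mathrm{right}} = \{p\}\cup\bigcup_{i\in R} B_i$. By construction these share only $p$, so the intersection condition is automatic, and the requirement $|\mathcal{K}_x|\ge n/4$ reduces to $\sum_{i\in L} b_i \ge n/4 - 1$ together with the symmetric bound on $R$. I would obtain such a split greedily: order the branches arbitrarily, form the prefix sums $P_0 = 0 < P_1 < \dots < P_k = n-1$, and take $L$ to be the shortest prefix whose sum first lands in the target window $[\,n/4 - 1,\ 3n/4\,]$.

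The crux is to argue this window is always hit, and I expect this balancing step to be the main obstacle, since the centroid supplies only an \emph{upper} bound $b_i \le n/2$ on branch sizes, which must be converted into simultaneous \emph{lower} bounds on both sides. The window has length $3n/4 - (n/4 - 1) = n/2 + 1$, strictly larger than the maximum step $\max_i b_i \le n/2$; hence consecutive prefix sums differ by less than the window width and the increasing sequence $P_0,\dots,P_k$ cannot jump over it. Because $n\ge 6$ guarantees $P_0 = 0 < n/4 - 1$ and $P_k = n-1 > 3n/4$, the walk starts strictly below the window and ends strictly above it, forcing some $P_j$ inside. Setting $\sum_{i\in L} b_i = P_j \in [\,n/4 - 1,\ 3n/4\,]$ yields $\sum_{i\in R} b_i = (n-1) - P_j \in [\,n/4 - 1,\ 3n/4\,]$ as well, so both sides meet the $n/4$ bound. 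Computing prefix sums and scanning for the first hit is $O(n)$, completing the linear-time claim.

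This is precisely where the hypothesis $n\ge 6$ enters: for smaller $n$ the window may be empty or fail to be straddled by $P_0$ and $P_k$. To be safe I would also verify the degenerate configurations explicitly, noting that a leaf can never be a centroid (its lone branch would have size $n-1 > n/2$) and that the centroid property rules out any single branch exceeding $n/2$, so the step-versus-window inequality never degenerates.
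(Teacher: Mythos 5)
Your proof is correct and follows essentially the same route as the paper: find a centroid-style vertex $p$ whose removal leaves branches of size at most $n/2$, then group the branches greedily via prefix sums so that both sides land in the required window. The only difference is presentational—the paper black-boxes the existence of $p$ by citing a balanced-separator lemma from the treewidth literature, whereas you prove it directly with the standard subtree-size descent, making your argument self-contained.
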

The algorithmic proof is provided in Appendix~\ref{sec:pivoting_lemma} and uses standard tools from the theory of balanced separators.

The \textit{left child} of the non-leaf node for $\mathcal{ST}$ corresponds to $\mathcal{ST}_{\mathrm{left}}$ and the \textit{right child} to $\mathcal{ST}_{\mathrm{right}}$. In addition to these two pointers, a non-leaf node also contains eight extra fields, partitioned into two groups, one corresponding to its left child and one to its right children.
The fields corresponding to the left child are as follows:
\begin{itemize}[topsep=0pt, leftmargin=*, noitemsep]
    \itemsep1mm
    \item \textbf{Left-ids:} an array of the ids (in $\mathcal{T}$) of those vertices that are in $\mathcal{ST}_{\mathrm{left}}$, mapping the ids of vertices in $\mathcal{ST}_{\mathrm{left}}$ to the original ids in $\mathcal{T}$ (each sub-tree uses consecutive numbers from $0$ as ids locally).
    \item \textbf{Left-d:} an array of different shortest-path \textbf{d}istances from the pivot point to the vertices in $\mathcal{ST}_{\mathrm{left}}$.
    \item \textbf{Left-id-d:} an array mapping the ids of vertices (in $\mathcal{ST}_{\mathrm{left}}$) to the indices in \textrm{left-d} of their corresponding distances from the pivot point.
    \item \textbf{Left-s:} a corresponding array of the ordered sub-\textbf{s}ets of ids (in $\mathcal{ST}_{\mathrm{left}}$) of vertices within a particular distance from the pivot point.
\end{itemize}
Fields corresponding to the right child are defined similarly.
The leaf nodes of the IT consist only of the $f$-transformed (element-wise) distance matrices $\mathbf{D}$ for their corresponding sub-trees (see: Fig~\ref{fig:IntTree}). In principle, the leaf nodes of IT correspond to sub-trees with less than $t=6$ vertices each. In practice, we choose higher $t$, for more efficient integration (see: discussion in Sec. \ref{sec:exp_speed}).
\paragraph{Time \& space complexity of constructing ITs:} From what we have said so far, it is clear that an IT can be constructed by applying \textit{breadth first search} (BFS) and the linear algorithmic procedure for constructing the decomposition from Lemma \ref{thm:pivot_lemma}. Note that every vertex of the input tree appears in the logarithmic number of nodes in the IT since the size of the sub-tree is at most $\frac{3}{4} \times$ the size of its parent in IT. We conclude that IT for the given input tree $\mathcal{T}$ can be computed in $O(N\log(N))$ time, where $N$ stands for the number of vertices $|\mathcal{T}|$ of $\mathcal{T}$.

\subsection{Integrating with IntegratorTrees}
\label{sec:int-it}
We are ready to explain how ITs allow us to efficiently integrate any given tensor field $\mathbf{X} \in \mathbb{R}^{N \times d_{1} \times ... \times d_{s}}$ defined on $\mathcal{T}$ for a wide class of function $f:\mathbb{R} \rightarrow \mathbb{R}$. We will apply a \textit{divide-and-conquer} strategy. 

We start in the root node of IT. If that node is a leaf then the $f$-transformed distance matrix is stored and can be directly used for matrix-tensor multiplication. If this node is not a leaf, then it encodes the decomposition $(\mathcal{T}_{\mathrm{left}}, \mathcal{T}_{\mathrm{right}},p)$. Take some $v \in \mathrm{V}(\mathcal{T}_{\mathrm{left}})$. Note that the value $\mathbf{M}^{\mathrm{G}}_{f}\mathbf{X}[v]$ of the new field in $v$ after $f$-integration is given as follows for $\mathcal{W} = \mathrm{V}(\mathcal{T}_{\mathrm{right}}) \backslash \{p\}$:
\begin{equation}
\underbrace{\sum_{j \in \mathrm{V}(\mathcal{T}_{\mathrm{left}})}f(\mathrm{dist}(v,j))\mathbf{X}[j]}_{\mathrm{F}_{\textrm{inner}}(v)} + \underbrace{\sum_{j \in \mathcal{W}}f(\mathrm{dist}(v,j))\mathbf{X}[j]}_{\mathrm{F}_{\textrm{cross}}(v)}.
\label{eqn:recursive}
\end{equation}
To compute the new values of the field for nodes $v \in \mathcal{T}_{\mathrm{left}}$, one needs to:

\begin{enumerate}[topsep=1pt, leftmargin=*, noitemsep]
\itemsep1mm
\item Compute the contribution to it from $\mathcal{T}_{\mathrm{left}}$ ($\mathrm{F}_{\textrm{inner}}(v)$-terms). This can be done simply by applying Eq.~\ref{eqn:recursive} recursively for $\mathcal{T}_{\mathrm{left}}$, which 
means traversing to the left child of the root.
\item Add the so-called \textit{cross-terms} contributions coming from the vertices of $\mathcal{W}$ ($\mathrm{F}_{\textrm{cross}}(v)$-terms).
\end{enumerate}
The key observation is that the latter (cross-term) contributions can be retrieved simply by computing $\mathbf{C} \mathbf{X}^{\prime}$, where: (1) $\mathbf{C} \in \mathbb{R}^{k \times l}$ with $k$ and $l$ being the sizes of the node's left-d and right-d arrays respectively. $\mathbf{C}(i,j)=f(\textrm{left-d}[i]+\textrm{right-d}[j])$, and (2) Let  $b_{j} \overset{\mathrm{def}}{=} |\textrm{right-s}[j]|$ where $|\cdot|$ refers to the size of the subset. Then $\mathbf{X}^{\prime} \in \mathbb{R}^{l \times d_{1} \times \ldots \times d_{s}}$ is defined as follows:
\begin{equation}
\mathbf{X}^{\prime}[j] \overset{\mathrm{def}}{=} \sum_{z=0}^{b_{j}-1} \mathbf{X}[\textrm{right-ids}[\textrm{right-s}[j][z]]].
\end{equation}
Given the structure of IT, tensor $\mathbf{X}^{\prime}$ can be computed in linear time. Note that the following holds:
\begin{equation}
\mathrm{F}_{\mathrm{cross}}(v) = (\mathbf{C}\mathbf{X}^{\prime})[\tau(v)] - f(\textrm{left-d}[\tau(v)])\mathbf{X}^{\prime}[0],
\end{equation}
where $\tau(v)=\textrm{left-id-d}[v]$.
Analogous analysis can be derived for $v \in \mathcal{T}_{\mathrm{right}}$, with matrix $\mathbf{C}^{\top}$ replacing $\mathbf{C}$.
Thus the overall time complexity of the cross-terms computations is determined by the algorithm for matrix-tensor multiplications with matrices $\mathbf{C}$ and $\mathbf{C}^{\top}$.

\subsubsection{The case for structured matrices: multiplications with $\mathbf{C},\mathbf{C}^{\top}$ and cordiality}
\label{sec:ldr}
Matrices $\mathbf{C},\mathbf{C}^{\top}$ are of the form: $[f(x_{i}+y_{j})]_{i=1,...,a}^{j=1,...,b}$ for some sequences $X=(x_{i})_{i=1}^{a}$, $Y=(y_{j})_{j=1}^{b}$ and $a,b \in \mathbb{N}_{+}$.
\begin{definition}[cordial functions]
\label{def:cordial}
A function $f:\mathbb{R} \rightarrow \mathbb{R}$ is $d$-\textit{cordial} (or: \textit{cordial} if $d$ is not specified), if there exists $d \in \mathbb{N}$ such that matrix-vector multiplication with a matrix $\mathbf{M}=[f(x_{i}+y_{j})]_{i=1,...,a}^{j=1,...,b}$ can be conducted in time $O((a+b)\log^{d}(a + b))$ for every $(x_{i})_{i=1}^{a}$, $(y_{j})_{j=1}^{b}$.
\end{definition}

Next, we demonstrate the importance of cordial functions in our FTFI framework. 

\begin{lemma}[$f$-integration with cordial functions]
\label{lemma:cordial}
If $f$ is $d$-cordial then $f$-integration for the general weighted tree of $N$ vertices can be conducted in time $O(N\log^{d+1}(N))$.
\end{lemma}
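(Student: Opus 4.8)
The plan is to express the total running time as the one-time $O(N\log N)$ cost of building the IntegratorTree (Section~\ref{sec:it}) plus the cost of the divide-and-conquer integration of Section~\ref{sec:int-it}, and then to bound the latter by charging each node of the IntegratorTree a cost governed by the cordiality of $f$. First I would write the integration time as a sum over the nodes $u$ of the IntegratorTree, with the recursive $\mathrm{F}_{\mathrm{inner}}$ contributions folded into the costs of the children. At an internal node $u$ with $m_u \overset{\mathrm{def}}{=} |\mathcal{ST}_u|$, the only work beyond recursion is: assembling the aggregated tensor $\mathbf{X}'$ (linear, $O(m_u)$); the two matrix-tensor products $\mathbf{C}\mathbf{X}'$ and $\mathbf{C}^{\top}(\cdot)$ that yield the cross-terms $\mathrm{F}_{\mathrm{cross}}$ for $v \in \mathcal{T}_{\mathrm{left}}$ and $v \in \mathcal{T}_{\mathrm{right}}$; and the per-vertex rank-one pivot corrections $-f(\textrm{left-d}[\tau(v)])\mathbf{X}'[0]$ (also $O(m_u)$ in total).

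The crux is to bound the two matrix-tensor products. Since $\mathbf{C}=[f(\textrm{left-d}[i]+\textrm{right-d}[j])]_{i,j}$ is exactly of the form $[f(x_i+y_j)]$ required by Definition~\ref{def:cordial}, with dimensions $k \times l$ equal to the lengths of the \textrm{left-d} and \textrm{right-d} arrays, and since these arrays index distances inside $\mathcal{ST}_{\mathrm{left}}$ and $\mathcal{ST}_{\mathrm{right}}$, which overlap only in the pivot, we have $k+l \le m_u+1$. Hence $d$-cordiality bounds each product by $O((k+l)\log^{d}(k+l)) = O(m_u \log^{d} m_u)$, and this subsumes the linear terms, so the per-node cost is $g(m_u)=O(m_u \log^{d} m_u)$. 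I would then sum over all nodes. Using the geometric shrinkage $|\mathcal{ST}_x| \le \tfrac{3}{4}|\mathcal{ST}|$ following from Lemma~\ref{thm:pivot_lemma} (which guarantees that each vertex of $\mathcal{T}$ lies in only $O(\log N)$ nodes, see Section~\ref{sec:it}), the total of the subtree sizes satisfies $\sum_{u}m_u = O(N\log N)$. Bounding $\log^{d} m_u \le \log^{d} N$ at every node then gives
\begin{equation}
\sum_{u}O\!\left(m_u\log^{d} m_u\right) \;\le\; O\!\left(\log^{d} N\right)\sum_{u}m_u \;=\; O\!\left(N\log^{d+1} N\right),
\end{equation}
which dominates the $O(N\log N)$ construction cost and yields the claimed bound. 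Equivalently, one can solve the recurrence $W(m)=W(m_L)+W(m_R)+O(m\log^{d}m)$ with $m_L+m_R=m+1$ and $m_L,m_R\le\tfrac{3}{4}m+1$ directly, obtaining $W(m)=O(m\log^{d+1}m)$.

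I expect the main obstacle to be the exactness and dimension bookkeeping of the cross-term reduction rather than the time accounting, which is routine once the per-node cost is pinned down. Specifically, one must verify that, after deduplicating distances, $\mathbf{C}$ remains of the cordial form $[f(x_i+y_j)]$ with $k+l=O(m_u)$, and that the aggregation into $\mathbf{X}'$ followed by the rank-one correction recovers $\mathrm{F}_{\mathrm{cross}}(v)$ for every $v$ with the pivot $p$ counted exactly once; any slack here (e.g., failing to deduplicate distances, or double-counting $p$) would either break correctness or inflate the factor multiplying the cordial cost. The second point of care is that the $\tfrac{3}{4}$-shrinkage survives the pivot-sharing overlap, since that overlap is precisely what could, if mishandled, push $\sum_u m_u$ beyond $O(N\log N)$; the additive ``$+1$'' per split must be shown not to accumulate faster than the geometric decrease removes mass.
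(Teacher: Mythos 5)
Your proposal is correct and follows essentially the same route as the paper: the paper's proof sets up the recurrence $T(N) \leq T(cN) + T((1-c)N) + O(N\log^{d}(N))$, $\tfrac{1}{4} \leq c \leq \tfrac{3}{4}$, observing that the per-node work is dominated by the cordial multiplications with $\mathbf{C}$ and $\mathbf{C}^{\top}$, and solves it—exactly the recurrence you state at the end and unroll as a sum over IntegratorTree nodes. Your version is somewhat more careful than the paper's (explicitly bounding $k+l \leq m_u+1$ and flagging the additive ``$+1$'' from the shared pivot), but the decomposition, the key use of cordiality, and the accounting are the same.
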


\begin{proof}
Denote by $T(N)$ time complexity for running FTFI on the $N$-vertex tree. We have the following recursive formula for $T$, where $\frac{1}{4} \leq c \leq \frac{3}{4}$:
{\small \begin{equation}
T(N) \leq T(cN) + T((1-c)N) + O(N\log^{d}(N))
\end{equation}}
This is implied by the fact that: (1) the size of each sub-tree is at most $\frac{3}{4} \times$ the size of its parent, (2) the computation across left and right children is dominated by multiplications with matrices $\mathbf{C}$ and $\mathbf{C}^{\top}$. The solution of this recursion leads to the statement. 
\end{proof}
Next, we show some practical implications of Lemma \ref{lemma:cordial},
where tree weights are \textbf{completely arbitrary}.
Additional results are given in Sec. \ref{sec:additional_implications}.

\paragraph{Rational functions:} We claim that every rational $f$ is $(2+\epsilon)$-cordial for any $\epsilon>0$. We will use Lemma 1 from \citep{inverse-geo} stating that: given any set of $b$ rational functions $R_{j}(x)=\frac{P_{j}(x)}{Q_{j}(x)}$ and $\{x_{i}\}_{i=1}^{a}$, one can compute the $a$ values $\sum_{j=1}^b R_{j}(x_{i})$ in time $O((a+b)\log^{2}(b)\log(\log(b)))$ (by applying FFT). For a given vector $\mathbf{v} \in \mathbb{R}^{b}$, it thus suffices to define: $R_{j}(x)=v_{j}f(x+y_{j})$ and that lemma can be applied to efficiently compute $\mathbf{Mv}$. We conclude that for any $\epsilon>0$, $f$-integration can be conducted in $O(N\log^{3+\epsilon}(N))$ time for $N$-vertex weighted trees and any rational $f:\mathbb{R} \rightarrow \mathbb{R}$ (see also: Sec. \ref{sec:exp_metrics}, Sec. \ref{sec:interpolation_meshes}, Sec. \ref{sec:exp_tts}).
\paragraph{Polynomial functions:} The above result on rational functions clearly applies also to polynomial $f$, but here we can do better. We show that $f$ is $0$-cordial. Assume that $f(x)=\sum_{t=0}^{B} a_{t}x^{t}$. We have: $\mathbf{M}=\sum_{t=0}^{B}\sum_{l=0}^{t} a_{t}{t \choose l} \mathbf{M}_{l,t-l}$, where matrix $\mathbf{M}_{u,v} \in \mathbb{R}^{a \times b}$ is defined as an outer-product of two vectors: $(x_{1}^{u},...,x_{a}^{u}) \in \mathbb{R}^{a}$ and $(y_{1}^{v},...,y_{b}^{v}) \in \mathbb{R}^{b}$. Thus each $\mathbf{M}_{u,v}$ supports linear matrix-vector multiplication (via associativity property). The proof is completed, since $B$ is a constant. We conclude that $f$-integration can be conducted in $O(N\log(N))$ time for $N$-vertex weighted trees and any polynomial $f:\mathbb{R} \rightarrow \mathbb{R}$ (see: Fig.~\ref{fig:efficient} and Fig~\ref{fig:loss-poly}).

\paragraph{Exponential functions:} Take $f(x)=\exp(\lambda x)$. Then $\mathbf{M}$ is an outer-product of two vectors: $(\exp(\lambda x_{i}))_{i=1}^{a} \in \mathbb{R}^{a}$ and $(\exp(\lambda y_{j}))_{j=1}^{b} \in \mathbb{R}^{b}$. The remaining analysis and conclusion is thus the same as for the polynomial case (see also: Sec. \ref{sec:exp_tts}).

\paragraph{Function: $f(x) = \frac{\exp(\lambda x)}{x+c}$:} ($c$ is a constant) We claim that $f$ is $2$-cordial. In that setting, matrix $\mathbf{M}$ satisfies: $\mathbf{M}(i,j)=\frac{\exp(\lambda x_{i})\exp(\lambda y_{j})}{(x_{i}+\frac{c}{2})+(y_{j}+\frac{c}{2})}$ and thus is a \textit{Cauchy-like} LDR, supporting fast $O(N\log^{2}(N))$ matrix-vector multiplication \citep{cauchy}. We conclude that $f$-integration can be conducted in $O(N\log^{3}(N))$ time for $N$-vertex weighted trees and $f(x)=\frac{\exp(\lambda x)}{x+c}$ (see: Fig.~\ref{fig:efficient}).
\begin{figure}[t]
    \begin{center}
    \includegraphics[width=\linewidth]{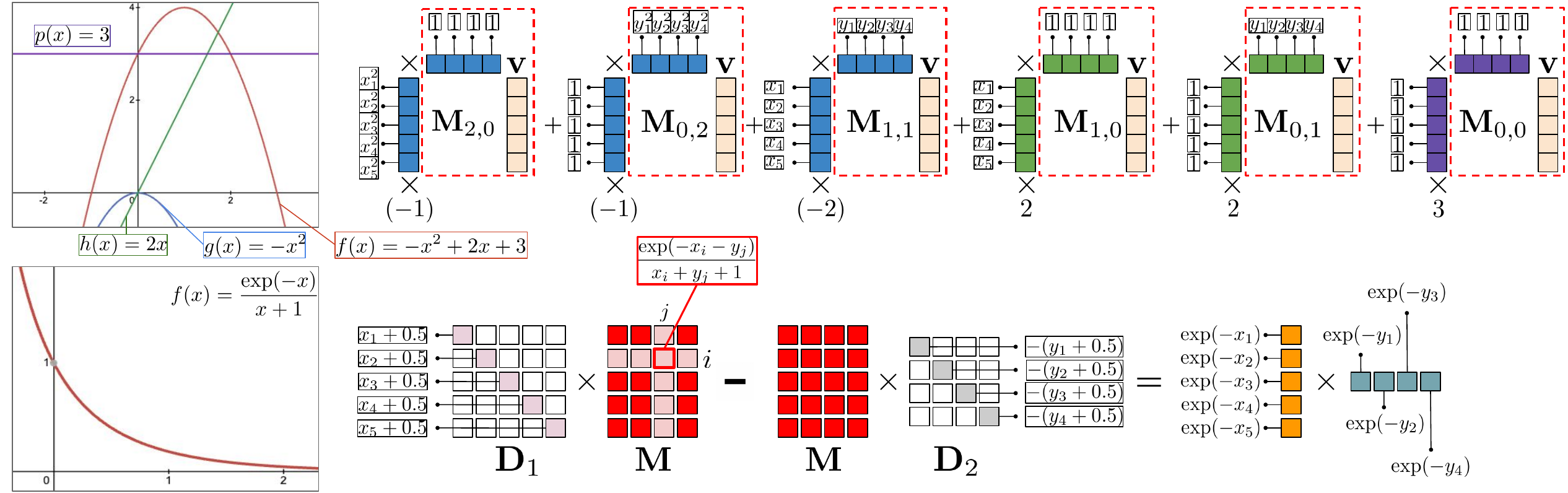}
    \caption{\small{Pictorial representations of the main concepts behind efficient matrix-vector multiplications $\mathbf{Mv}$ with $\mathbf{M} \in \mathbb{R}^{5 \times 4}$, for the polynomial $f$ and $f(x)=\frac{\exp(\lambda x)}{x+c}$. In the polynomial case, $\mathbf{M}$ is re-written as a sum of low-rank outer-product matrices corresponding to terms of different degrees (e.g., constant, linear, quadratic, etc.). Matrix associativity property is applied for efficient calculations (dotted-border blocks indicating the order of computations). In the second case, $\mathbf{M}$ is high-rank, but the so-called \textit{low displacement rank operator} $\Delta_{D_{1},D_{2}}:\mathbf{X} \rightarrow \mathbf{D}_{1}\mathbf{M}-\mathbf{M}\mathbf{D}_{2}$ for diagonal $\mathbf{D}_{1},\mathbf{D}_{2}$ can be applied to make it a low-rank outer-product matrix. The multiplication with $\mathbf{M}$ can be efficiently performed using the theory of LDR matrices~\citep{ldr-1}.}} 
    \label{fig:efficient} 
    \end{center}  
\vspace{-5mm}
\end{figure}

\paragraph{Functions $f(x)=\exp(ux^{2}+vx+w)$ and trees with positive rational weights:} Now matrix $\mathbf{M}$ can be re-written as $\mathbf{M}=\exp(w)\mathbf{D}_{1}\mathbf{V}\mathbf{D}_{2}$, where $\mathbf{D}_{1} \in \mathbb{R}^{a \times a}$ and $\mathbf{D}_{2} \in \mathbb{R}^{b \times b}$ are diagonal, with diagonal entries given by sequences $\{\exp(ux_{i}^{2}+vx_{i})\}_{i=1}^{a}$ and $\{\exp(uy_{j}^{2}+vy_{j})\}_{j=1}^{b}$ respectively, and furthermore $\mathbf{V}$ is the \textit{generalized Vandermonde matrix} (GVM) (using arbitrary nonnegative integers as exponents). It is defined as: 
$\mathbf{V}(i,j)=r_{i}^{s_{j}}$, where $r_{i}=\exp(\frac{2ux_{i}}{q})$ and $s_{j}=y_{j}q \in \mathbb{N}$. As in the previous case, the embedding trick can be applied, but we will use it only for columns. That effectively leads to the completion of the set of exponents $\{s_{j}\}$ to the set of consecutive integers starting from $0$ and a regular Vandermonde matrix, that supports $O(N\log^2(N))$ matrix-vector multiplication, replacing GVM. The benefit of this embedding, as compared to the previous one, is that even though it still increases the number of columns by a multiplicative factor of $p$, the number of rows does not change. Therefore, for $p \gg \log(N)$, substantial computational speedups are achieved (see:  Sec. \ref{sec:exp_tts}).

\section{Experiments}
~\label{sec:expt_main}
\label{sec:exp}
In this section, we outline the experimental setup and report the performance of FTFI across various settings. For all the experiments, we only consider minimum spanning tree (MST) as an approximation of our graph.  Specifically, we design experiments to answer these research questions: 
\begin{itemize}[topsep=0pt, leftmargin=11mm, noitemsep]
    \itemsep0.5mm
    \item[(\textbf{Q1})] How efficient are FTFIs for tree field integration?
    \item[(\textbf{Q2})] How does the approximation quality of FTFI compare to other integration algorithms? 
    \item[(\textbf{Q3})] How can we further improve the approximation quality in FTFI?
    \item[(\textbf{Q4})] How can we use FTFI in real-world large-scale settings?
\end{itemize}

\begin{wrapfigure}[15]{r}{0.58\textwidth}
\vspace{-10mm}
\label{fig:ldr-lr}
    \begin{center}
    \includegraphics[width=.57\textwidth]{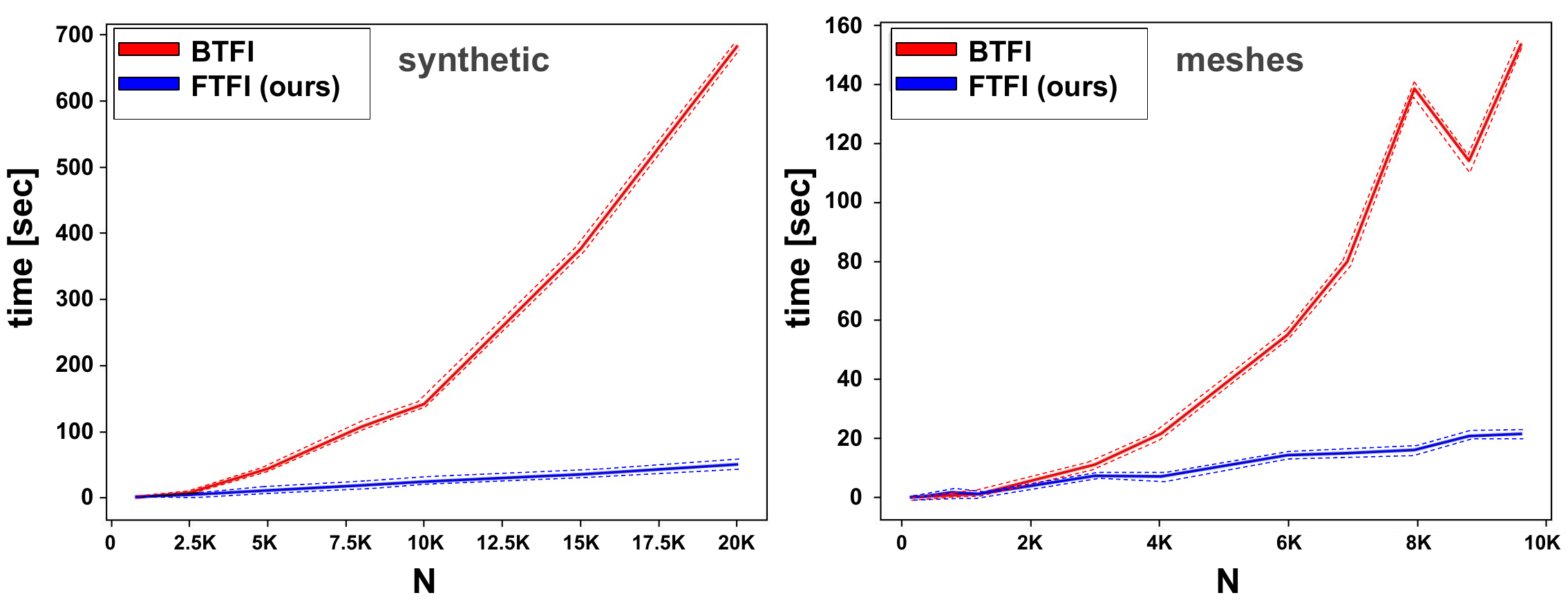}
    \caption{{Runtime comparison of FTFI with BTFI as a function of the number of vertices, $N$. \textbf{Left:} Synthetic graphs. \textbf{Right}: Mesh-graphs from \href{https://ten-thousand-models.appspot.com/}{Thingi10K}. The speed is not necessarily monotonic in $N$ as it depends on the distribution of lengths of the shortest paths. For each graph, 10 experiments were run (std. shown via dotted lines). 
    }}
    \label{fig:expt_comp}
    \end{center}
\end{wrapfigure}
\subsection{Runtime Efficiency of FTFI}
\label{sec:exp_speed}
The main goal of this experiment is to evaluate the speedups obtained by FTFI as compared to brute-force tree field integrator~(BTFI) i.e. the explicit calculation of Eq~\ref{eq:gfi} on a tree.
 We consider two classes of graphs: \textbf{(a)} \textit{synthetic}, obtained from a path-graph by adding random edges and \textbf{(b)} \textit{mesh graphs} from 
{Thingi10K}~\citep{Thingi10K} dataset. For BTFI, we compute the MST and then integrate a random scalar field $\mathbf{X}$ on the vertices of the MST. Since BTFI \& FTFI are numerically equivalent, we report the pre-processing time and integration as a function of vertex count ($N$) in Fig.~\ref{fig:expt_comp}. We observe that FTFI achieves up to \textbf{13x} speedups for 20K-vertex meshes and \textbf{5.7x}+ for synthetic graphs with over 10K vertices compared to BTFI.

\subsection{Approximation Quality of FTFI}
We evaluate the approximation quality achieved by FTFI across a wide range of graph-based tasks. 
\par 

\textbf{Interpolation on meshes.}\label{sec:interpolation_meshes}
We compare the efficiency of FTFI with baselines on the \textit{normal vector prediction task}.
Every node of the considered mesh $\mathrm{G}$ with a vertex-set $\mathrm{V}$, is associated with a location $\mathbf{x}_i\in\mathbb{R}^3$ and a vertex normal $\mathbf{F}_i\in\mathbb{R}^3$. For each mesh, we randomly select a subset $\mathrm{V}' \subseteq \mathrm{V}$ with $|\mathrm{V}'|=0.8|\mathrm{V}|$ and mask out their vertex normals (set as zero vectors). The interpolation task involves  predicting the vertex normals of each masked node $i\in \mathrm{V}'$ as:
$\mathbf{F}_i=\sum_{j\in \mathrm{V}\setminus \mathrm{V}'}\mathrm{K}_f(i,j)\mathbf{F}_j,  
$
where $\mathrm{K}_{f}(w,v) = f(\mathrm{dist}(w,v))$, with $\mathrm{dist}(w,v)$ being the shortest path distance between node $w$ and $v$, and $f$ is a rational function $f(x)={1}{/(1+\lambda x^2)}$.
We perform a grid search to set hyperparameter $\lambda$ for each mesh and report the result with the highest cosine similarity between predicted and ground truth vertex normals, averaged over all the nodes.
We run tests on \textbf{40 meshes} of the 3D-printed objects with a wide range of sizes from the \href{https://ten-thousand-models.appspot.com/}{Thingi10K} dataset  (details in \cref{sec:mesh_interpolation_app}).
We compare FTFI with BTFI, low-distortion tree-based algorithms such as Bartal Trees \citep{bartal1996probabilistic} and FRT trees \citep{fakcharoenphol2004tight} alongside the state-of-the-art method for graph-field integration, the Separator Factorization (SF) algorithm \citep{pcchoro}. We also compare against the baseline BGFI which entails explicitly materializing the kernel matrix of $\mathrm{G}$ and then performing matrix tensor multiplication with a tensor field $\mathbf{F}$ defined by the $\mathbf{F}_{i}$'s.

Preprocessing involves building specific tree structures (FRT, Bartal), calculating the kernel matrices (BGFI, BTFI), or creating specialized data structures (SF, FTFI) for efficient later use.  The first two plots in
Fig. \ref{fig:vertex_normal} shows the pre-processing time and cosine similarity for various algorithms applied to meshes of different sizes. 
FTFI is the fastest in terms of pre-processing time and achieves competitive performance in terms of cosine similarity (between predicted and actual vertex normals) when compared with the SF algorithm while being numerically equivalent to BTFI. 
FTFI is a few orders of magnitude faster than BTFI and the tree-based methods while maintaining accuracy. 

\begin{figure}[t!]
    \begin{center}
    \includegraphics[height=.24\textwidth, keepaspectratio]{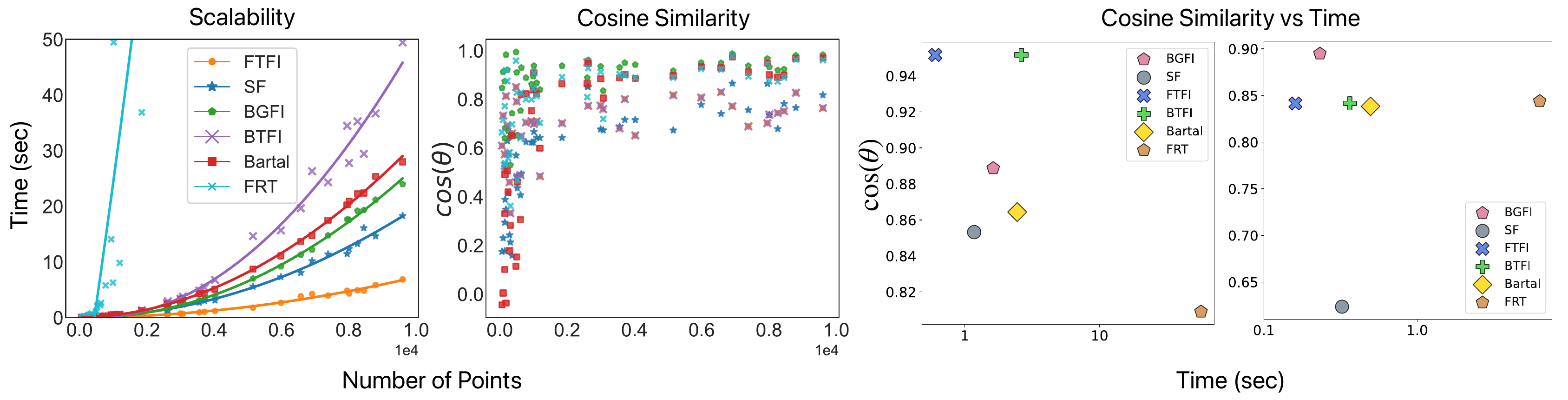}
    \caption{Speed (pre-processing time) and accuracy (cosine similarity) comparison of the FTFI and other baselines for vertex normal prediction on meshes. Cosine similarity of BFFI and FTFI almost overlaps. The last two figures are qualitative examples showcasing the tradeoff between cosine similarity and preprocessing time for meshes of sizes 3K and 5K nodes respectively.}\label{fig:vertex_normal}
    \end{center}

\end{figure}

\par 
\textbf{Graph classification.} 
\label{sec:graph}
Graph kernels have been widely used for graph classification tasks in previous works~\citep{Kriege_2020, Nikolentzos_2021}. 
We compare the classification results obtained using the approximate kernel from FTFI with those from the exact SP kernel.
In this setting, we use the Shortest Path (SP) kernel, $f(\mathrm{dist}(i,j))$. 
 We perform experiments on a wide range of bioinformatics and social networks datasets like \textsc{D\&D}, \textsc{Mutag}, \textsc{Reddit}, \textsc{Imdb}, among others. We follow~\citep{delara2018simple} and construct the graph feature for both kernels by using the smallest $k$ eigenvalues ($k$ is a hyperparameter). This feature set is then used for classification, using a random forest classifier.  We observe that FTFI achieves significant speed improvements while achieving similar accuracy compared to its brute-force counterpart, BGFI (see Fig.~\ref{fig:gr_class_main}). We provide more details about the experimental setup and baselines Appendix~\ref{sec:gr_class_appendix}. We also report additional experiments on meshes and point clouds in Appendix~\ref{sec:ag_metrics}.

\begin{figure}[t]
    \centering
\includegraphics[width=\textwidth, keepaspectratio]{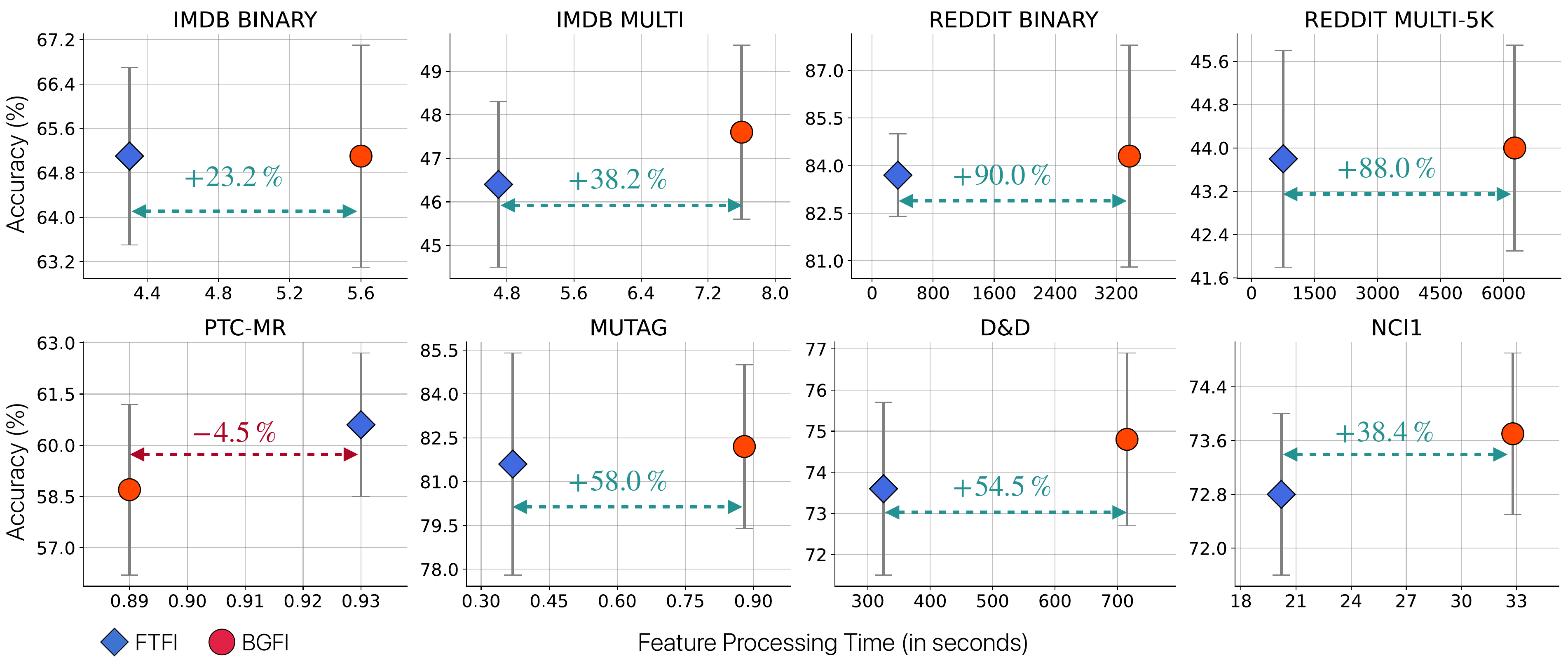}
    \caption{Trade-off plot comparing graph classification accuracy and feature processing time for the classifiers using FTFI and BGFI. FTFI achieves similar accuracy as BGFI while significantly reducing fp time across most datasets. We report the reduction in FTFI's processing time ($\pm$x\%) compared to BGFI using a dotted line.} \label{fig:gr_class_main}
\end{figure}

\subsection{Improving approximation quality with learnable $f$-distance matrices}
\label{sec:exp_metrics} 
We propose to further improve the approximation quality of FTFI by learning a $f$-distance matrix on metrics derived from the MST. As an application, we choose \textit{general graph metrics}, where our goal is to learn the shortest-path distance $d_{v,w}$ between a given pair of nodes $(v,w)$ in a graph. Given a $f$-distance matrix and tree-derived metric $\widehat{d}_{v,w}$ the objective is to learn a mapping to minimize 
{
\begin{equation}\label{eq:loss} 
\mathbb{E}_{(v, w) \in \mathcal{D}}
\left[\left(d_{v,w}-f^{a_{0},...,a_{t}}_{b_{0},...,b_{s}}(\widehat{d}_{v,w})\right)^{2}\right].
\end{equation}}
Rather than using a fixed $f$, we parameterize and train it. We consider rational function $f$: 
\begin{equation} 
f_{b_{0},...,b_{s}}^{a_{0},...,a_{t}}(x) = \frac{a_{0}+a_{1}x+...+a_{t}x^{t}}{b_{0}+b_{1}x+...+b_{s}x^{s}},
\end{equation}
where $a_{0},...,a_{t},b_{0},...,b_{s} \in \mathbb{R}$ are trainable parameters.

\underline{Training dataset $\mathcal{D}$}. For a graph $\mathrm{G}$, we randomly sample vertices. The training dataset consists of tuples of the form: $(v,w,d_{v,w},\widehat{d}_{v,w}) \in \mathcal{D}$, where $v, w$ are randomly sampled vertices. Each data point can be constructed in time $O(N\log(N))$, or even $O(N)$ if weights are in $\mathbb{N}$ \citep{thorup}.

\underline{Final evaluation}. To evaluate the quality of the approximation, we compute the relative Frobenius norm error:
$\epsilon = \frac{\|\mathbf{M}^{\mathrm{T}}_{f}-\mathbf{M}^{\mathrm{G}}_{\mathrm{id}}\|_{\mathrm{F}}}{\|\mathbf{M}^{\mathrm{G}}_{\mathrm{id}}\|_{\mathrm{F}}}$, where $\mathrm{\|\cdot\|_F}$ stands for the \textit{Frobenius norm}, $\mathrm{T}$ is a tree for a given graph $\mathrm{G}$ and $\mathrm{id}$ is an identity function (see: our notation from Sec. \ref{sec:intro_related}). It quantifies how closely the distance matrix of $\mathrm{G}$ is approximated by the $f$-distance 
matrix of $\mathrm{T}$. Computing $\epsilon$ is expensive and our training does not rely on it. Our empirical results show that the relative error, $\epsilon$, can be substantially improved by using the light-weight MSE training loss (defined in Eq. \ref{eq:loss}).

\begin{figure}[t]
    \begin{center}
    \includegraphics[width=\linewidth]{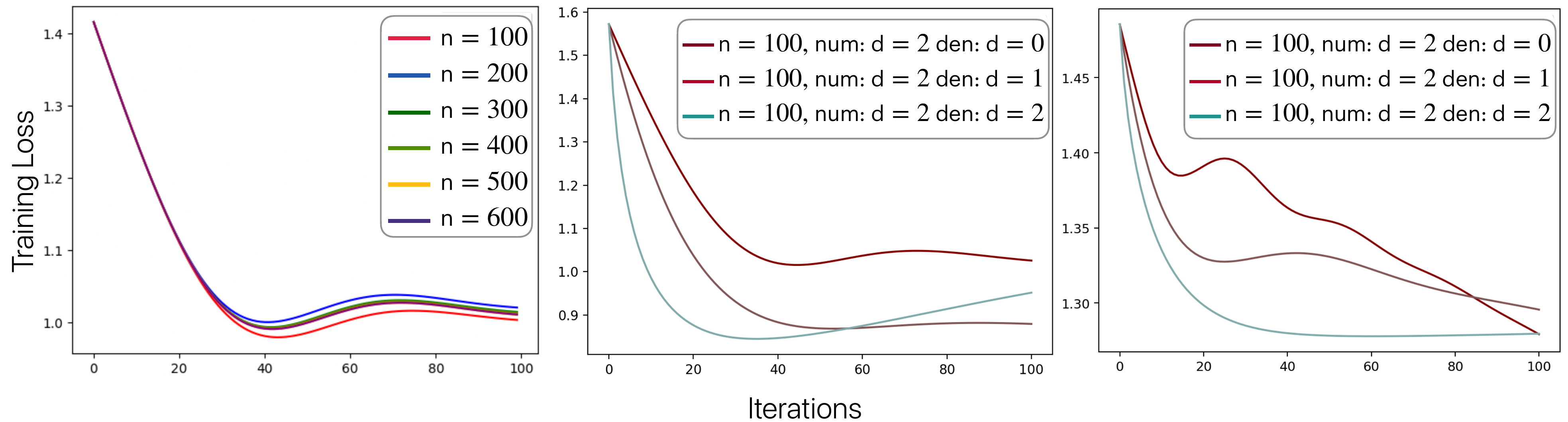}
    \caption{\textbf{Left:} Relative Frobenius norm error as a function of the number of training iterations for different sizes $n$ and learnable quadratic $f$. \textbf{Middle:} Comparison of the training of different rational functions $f$ with \textrm{num:d} defining the degree of the numerator and \textrm{den:d}, the degree of the denominator for the synthetic graph obtained from a path on $N=800$ by adding 600 random edges and assigning random weights taken from $(0,1)$. \textbf{Right:} constructed similarly, but for a sampled mesh graphs from \href{https://ten-thousand-models.appspot.com/}{Thingi10k} dataset.} \label{fig:opt_main}
    \end{center}
\end{figure}

We report the evaluation error for these experiments in Fig. \ref{fig:opt_main} (with additional results in Fig. \ref{fig:opt_main_appendix} in the Appendix). We observe that a rational function with quadratic numerator and denominator provides strong performance across different graphs. We notice that increasing the training set to $>100$ data points does not have a substantial impact on the final error. Estimating the coefficients of $f$ provides approximation improvements across all graphs in as few as \textbf{40 training steps}.

These above results show that tree-based estimators are expressive enough to emulate integration on arbitrary graphs. This expressive power can be further enhanced by pairing them with ``nonlinear" functions $f$. Thus, they explain why the presented techniques are relevant for general graphs.

\subsection{Large Scale Transformer Experiments using FTFI}~\label{sec:exp_tts}
For large-scale applications of FTFI, we select Topological Vision Transformers (TopViT), \citep{topvit}, and leverage it for efficient 
incorporation of masking within ViTs. We provide detailed description of masked Transformers in Appendix~\ref{sec:top_vit}.

\textbf{Topological Vision Transformers with trees :}
We propose an extension to TopViT that seamlessly integrates FTFI. In this extension, we model the mask matrix as an  $f$-distance matrix (with learnable $f$) defined on the minimum spanning tree (MST) obtained from the 2D grid graph image encoding, where vertices correspond to different patches.
We parameterize $f$ as $f_{g}^{t}\overset{\mathrm{def}}{=}g(\sum_{i=0}^{t}a_{t}x^{t})$. 
We use the linear attention mechanism introduced in Performers~\citep{performer}, where the attention kernel is written as:
$\mathrm{K}(\mathbf{q},\mathbf{k}) = \phi(\mathbf{q})^{\top}\phi(\mathbf{k})$ for a deterministic $\phi:\mathbb{R}^{d_{QK}} \rightarrow \mathbb{R}$, applied element-wise. 
We experiment with different values of hyperparameters $g$, $t$, $\phi$ and cross-heads parameter sharing strategies as shown in Table~\ref{tab:tts} (\textrm{synced} indicates that RPE-parameters are shared across different attention heads).

\begin{wrapfigure}[15]{R}{0.43\textwidth}
    \centering
    \vspace{-20pt}
    \includegraphics[width=.4\textwidth]{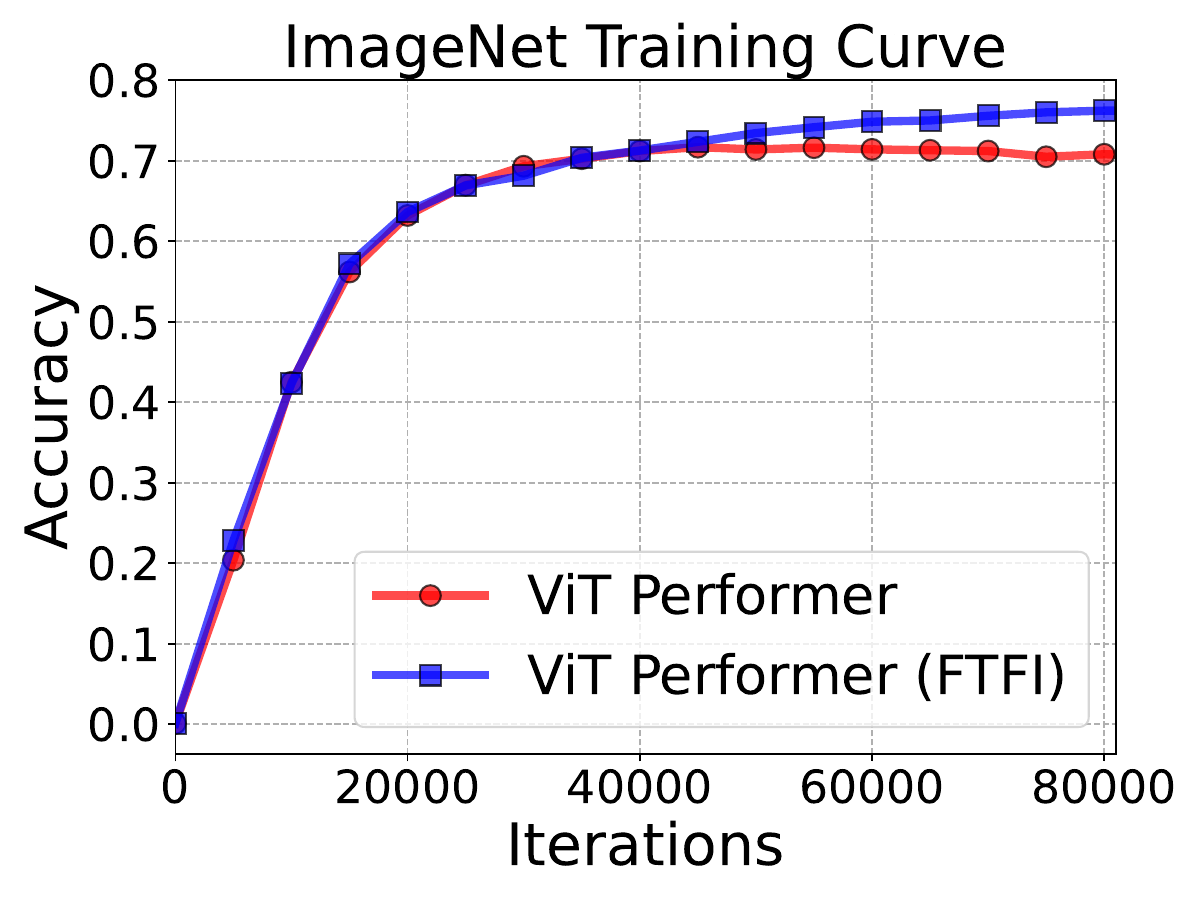}
    \caption{\textbf{Left:} Experiments with the RPE mechanism for ViT-B and on ImageNet. We observe that FTFI provides $\textbf{7\%}$ accuracy gain compared to the Performer variant.} \label{fig:new_super_image}
\end{wrapfigure}

We run experiments on ImageNet and Places365 datasets using ViT-B/16 (see Table \ref{tab:tts}).
For all the kernels, our variants beat the baselines. For $\phi(x)=x^{4}$, the best variant applies an exponentiated quadratic function, for which we apply Vandermonde matrices (see: discussion in Sec. \ref{sec:ldr}). Our best variant across all kernels (\textbf{78.79\%}) provides \textbf{2\%} accuracy gains over the best baseline (\textbf{76.76\%}). In the synced setting, we use only \textbf{three} extra learnable parameters per layer (shared in all attention heads across all layers) and obtain \textbf{1-1.5\%} accuracy gains. In the asynced setting, we use a small set of $\mathbf{36}$ extra learnable parameters per layer (3 extra parameters per head).
Overall, we observe that FTFI improves the approximation quality within Transformers with a minimal number of parameters. We provide additional discussions on the ViT results for ImageNet in Appendix~\ref{sec:imagenet_appendix} and for Places365 in Appendix~\ref{sec:places365_appendix}.

Additional results on the I-Naturalist dataset, where we outperform various low-rank attention baselines, are provided in Appendix~\ref{sec:inat_appendix}. 

\begin{table}[t]
    \centering
    \caption{Performance of Topological Vision Transformers
with tree-based masking. For each attention kernel, we present the results of the best variant in \textbf{bold} and Performer baselines in \textcolor{blue}{blue}.}
    \label{tab:tts}
    \resizebox{\textwidth}{!}{%
    \addtolength{\tabcolsep}{-0.35em}
    \begin{tabular}{@{}cc@{}cccccccccccccccccc@{}ccccc}
    \toprule
    \multicolumn{20}{c}{\textbf{ImageNet}} & & \multicolumn{4}{c}{\textbf{Place365}}\\
     & \multicolumn{4}{c}{$\phi:=$RELU} & & \multicolumn{4}{c}{$\phi:=x \rightarrow x^2$} & & \multicolumn{4}{c}{$\phi:=x \rightarrow x^4$} & & \multicolumn{4}{c}{$\phi:=\exp$} & & \multicolumn{4}{c}{$\phi:=\mathrm{ReLU}$} \\
 \cmidrule(lr){2-5} \cmidrule(lr){7-10} \cmidrule(lr){12-15} \cmidrule(lr){17-20} \cmidrule(lr){21-25}
       & $\textbf{\textrm{synced}}$ & $\textbf{\textrm{g}}$ & 
     $\textbf{\textrm{t}}$  & $\textbf{\textrm{Acc.}}$ (\%) & & $\textbf{\textrm{synced}}$ & $\textbf{\textrm{g}}$ & 
     $\textbf{\textrm{t}}$  & $\textbf{\textrm{Acc.}}$ (\%) & & $\textbf{\textrm{synced}}$ & $\textbf{\textrm{g}}$ & 
     $\textbf{\textrm{t}}$ & $\textbf{\textrm{Acc.}}$ (\%) & & $\textbf{\textrm{synced}}$ & $\textbf{\textrm{g}}$ & 
     $\textbf{\textrm{t}}$ & $\textbf{\textrm{Acc.}}$ (\%) & & $\textbf{\textrm{synced}}$ & $\textbf{\textrm{g}}$ & 
     $\textbf{\textrm{t}}$ & $\textbf{\textrm{Acc.}}$ (\%)\\
     \midrule
     & {NA} & NA & NA & \textcolor{blue}{76.23} & & NA & NA & NA & \textcolor{blue}{75.03} & & NA & NA & NA & \textcolor{blue}{76.37} & & NA & NA & NA & \textcolor{blue}{76.76} & 
     &  NA & NA & NA & 54.80\\
     & \cmark & $\exp$ & ~~{1} & 77.28 & & \cmark & $\exp$ & ~~1 & 76.66 & & \cmark & $\exp$ & ~~1 & 77.84 & & \xmark & $\exp$ & ~~1 & \textbf{78.79} & & \xmark & $\exp$ & 1 & 56.69\\
     & \cmark & $\exp$ & ~~2 & 76.60 & & \cmark & $\exp$ & ~~2 & 75.91 & & \cmark & $\exp$ & ~~2 & 77.23 & & \xmark & $\exp$ & ~~2 & 78.51 & & \xmark & $z \rightarrow z^{-1}$ & 1 & 56.44\\
     & \xmark & $\exp$ & ~~1 & \textbf{77.79} & & \xmark & $\exp$ & ~~1 & \textbf{76.76} & & \xmark & $\exp$ & ~~1 & 77.94 & & \xmark & $z \rightarrow z^{-1}$ & ~~1 & 77.39 & & \xmark & $z \rightarrow z^{-1}$ & 5 & 56.32\\
     & \xmark & $\exp$ & ~~2 & 77.43 & & \xmark & $\exp$ & ~~2 & 76.27 & & \xmark & $\exp$ & ~~2 & \textbf{78.15} & & \xmark & $z \rightarrow z^{-1}$ & ~~2 & 77.69 & & \xmark & $z \rightarrow z^{-1}$ & 10 & 56.51 \\
     \bottomrule
    \end{tabular}
    }\vspace{-1.5mm}
\end{table}

\paragraph{Larger Transformer models:} We scale our experiments to run on the larger ViT-L architectures and evaluate on ImageNet. In this setting, we use RPE mechanism with $g=\exp$ and $t=1$ (that provided strong performance in previous experiments) and \textrm{asynced} strategy. We observe that FTFI provides $\textbf{7\%}$ accuracy improvement (see: Fig. \ref{fig:new_super_image}). 

Further results on Video Transformer (ViViT)~\citep{arnab2021vivit} are provided in Appendix~\ref{sec:vivit}. We also provide additional experiments including Gromov-Wasserstein distance computation~\citep{gromovwass} (see Sec. \ref{sec:gw}), along with code pointers (Appendix~\ref{sec:expt_appendix}).
\section{Conclusion}
\label{sec:conclusion}
\vspace{-2mm}
We provided a new class of algorithms for fast and exact integration of tensor fields defined on weighted trees, relying on the theory of structured (in particular low displacement rank) matrices. We showed how those algorithms can be applied for accurate integration on general graphs, in particular via their minimum weight spanning trees. We presented several applications of the presented methods, from graph classification and interpolation on meshes, through graph metric approximation to Topological Vision Transformers. Our methods provide significant (5-13x) speedups while maintaining the quality of their exact counterparts.

\section{Author Contributions}
KC conceived the idea behind FTFI, proved the theoretical results, implemented FTFI algorithm and ran the vision experiments in this paper. AS integrated the FTFI algorithm in the GW style algorithms and ran some graph and point cloud classification tasks. SBRC ran graph classification experiments as well as experiments on the CUBES dataset. HL ran the experiments on the meshes. AD helped develop methods, and along with TS and SC acted as senior advisors for the project. All authors contributed to the writing of the manuscript.

\clearpage
\bibliography{conference}
\bibliographystyle{plainnat}


\newpage
\appendix

\section{Theoretical results}~\label{sec:theory_appendix}
In this section, we provide proofs of all theoretical results in the paper.
\subsection{Proof of Lemma \ref{thm:pivot_lemma}}
\label{sec:pivoting_lemma}
\begin{proof}
We will apply Lemma 7.19 from \citep{cygan} (that we provide also below for reader's convenience) and its algorithmic proof. We refer to \cite{cygan} for a definition of the related graph terms. 

\begin{lemma}
Assume that $\mathrm{G}$ is a graph of treewidth at most $k$, and consider a nonnegative weight function $\mathbf{w}:\mathrm{V}(\mathrm{G}) \rightarrow \mathbb{R}_{\geq 0}$ on the vertices of $\mathrm{G}$. Then in $\mathrm{G}$ there exists a $\frac{1}{2}$-balanced separator $X$ of size at most $k+1$.
\end{lemma}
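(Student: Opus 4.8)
The plan is to exhibit the separator as one of the bags of a width-$k$ tree decomposition of $\mathrm{G}$, selected by a weighted centroid argument. First I would fix a tree decomposition $(T,\{\beta(t)\}_{t\in V(T)})$ of $\mathrm{G}$ in which every bag has size at most $k+1$; such a decomposition exists precisely because $\mathrm{G}$ has treewidth at most $k$. Write $W=\mathbf{w}(\mathrm{V}(\mathrm{G}))$. For a node $a\in V(T)$, deleting $a$ from $T$ leaves a forest whose components I call $C_{1},\dots,C_{m}$, and for each such component $C$ I set $V_{C}=\big(\bigcup_{b\in C}\beta(b)\big)\setminus\beta(a)$. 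The standard tree-decomposition axioms (each vertex of $\mathrm{G}$ induces a connected subtree of $T$, and each edge of $\mathrm{G}$ is contained in a common bag) guarantee that $V_{C_{1}},\dots,V_{C_{m}}$ partition $\mathrm{V}(\mathrm{G})\setminus\beta(a)$ and that no edge of $\mathrm{G}$ joins $V_{C_{i}}$ to $V_{C_{j}}$ for $i\neq j$. Hence $\beta(a)$ is a separator of $\mathrm{G}$ whose side-pieces are exactly the sets $V_{C}$, and it has size at most $k+1$; it then remains only to choose $a$ so that $\mathbf{w}(V_{C})\le\tfrac12 W$ for every $C$.

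Next I would locate such an $a$ by an orientation argument on $T$. For each edge $e=\{a,b\}\in E(T)$, deleting $e$ splits $T$ into two subtrees; let $T_{a\to b}$ be the one containing $b$, and orient $e$ as $a\to b$ exactly when $\mathbf{w}(V_{T_{a\to b}})>\tfrac12 W$, where $V_{T_{a\to b}}$ is defined relative to $a$ as above. The key observation is that each edge receives at most one orientation: the two candidate heavy sides of $e$, namely $V_{T_{a\to b}}$ (bags on $b$'s side minus $\beta(a)$) and $V_{T_{b\to a}}$ (bags on $a$'s side minus $\beta(b)$), are disjoint, so their weights cannot both exceed $\tfrac12 W$. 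Since $T$ has $|V(T)|-1$ edges, the total number of oriented arcs is at most $|V(T)|-1$, so some node $a$ must have out-degree $0$; otherwise the out-degrees would sum to at least $|V(T)|$. For this $a$, every incident edge $\{a,b\}$ fails the heavy test, giving $\mathbf{w}(V_{T_{a\to b}})\le\tfrac12 W$ for all neighbours $b$, and since the components of $T-a$ are exactly the subtrees $T_{a\to b}$, we conclude $\mathbf{w}(V_{C})\le\tfrac12 W$ for every $C$, so $X=\beta(a)$ is the desired $\tfrac12$-balanced separator.

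The main obstacle I anticipate is not the counting step, which is routine, but stating the separation claim of the first step precisely: that $\beta(a)$ partitions $\mathrm{V}(\mathrm{G})\setminus\beta(a)$ into the pieces $V_{C}$ with no $\mathrm{G}$-edge crossing between distinct pieces, so that a weight bound on each $V_{C}$ genuinely bounds the weight of each connected component of $\mathrm{G}-\beta(a)$. This is exactly where the three tree-decomposition axioms must be invoked carefully. A minor point to dispatch is the degenerate case $|V(T)|=1$, where $\mathrm{V}(\mathrm{G})=\beta(a)$ and $X=\beta(a)$ is trivially balanced. Finally I would note that the argument is constructive: rooting $T$ and accumulating the weights $\mathbf{w}(V_{C})$ in a single bottom-up pass locates the balanced bag in linear time, which is the feature used by the algorithmic proof of Lemma~\ref{thm:pivot_lemma}.
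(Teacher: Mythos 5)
Your proof is correct. Note that the paper does not actually prove this statement itself: it is quoted verbatim as Lemma 7.19 of \citep{cygan}, with the proof deferred to that reference, and your argument is essentially the standard proof given there --- exhibit the separator as a bag of a width-$k$ tree decomposition chosen by a weighted centroid argument, using the decomposition axioms to show that each bag $\beta(a)$ separates $\mathrm{G}$ into the pieces $V_{C}$ indexed by components of $T-a$. The only cosmetic difference is in how the balanced bag is located: you use an edge-orientation argument (orient each edge of $T$ toward a side of weight exceeding $\tfrac{1}{2}W$, observe via the disjointness of the two candidate heavy sides that each edge gets at most one arc, and find a sink by counting), whereas the textbook roots $T$ and takes a lowest node whose subtree of bags carries weight more than $\tfrac{1}{2}W$; the two variants are equivalent, and your closing remark that a single bottom-up pass finds the bag in linear time is exactly the algorithmic feature that the paper's proof of Lemma~\ref{thm:pivot_lemma} imports from \citep{cygan}.
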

Note first that for each rooted tree, we can compute the size of each of its rooted sub-trees (and store it in the root of the sub-tree) in the linear time, simply by applying dynamic programming. We can now apply the above lemma for the tree $\mathrm{G}=\mathcal{K}$ with the weight function that assigns weight $w=1.0$ for each vertex. By following its algorithmic proof (and using breadth first search for tree exploration), we can obtain a node $p$ and sub-trees $T_{1},...,T_{l}$ rooted in vertices connected with $p$, with the following properties:
\begin{itemize}
\item $\mathrm{V}(T_{1}) \cup ... \cup \mathrm{V}(T_{l}) \cup \{p\}=\mathrm{V}(\mathcal{K})$,
\item $|\mathrm{V}(T_{i})| \leq \frac{1}{2} |\mathrm{V}(\mathcal{K})|$ for $i=1,...,l$ and where $||$ stands for the set size.
\end{itemize}
We then choose the first index $k$ such that $|\mathrm{V}(T_{1})| + ... + |\mathrm{V}(T_{k})| \geq \frac{3}{4}|\mathrm{V}(\mathcal{K})|$. Note that such an index $k$ exists and $k>1$ because of the above and the fact that our tree has at least six vertices. We define as $\mathcal{K}_{\mathrm{left}}$ a sub-tree of $\mathcal{K}$ induced by the set: $\mathrm{V}(T_{1}) \cup ... \mathrm{V}(T_{k-1}) \cup \{p\}$ and by $\mathcal{K}_{\mathrm{right}}$ a sub-tree of $\mathcal{K}$ induced by the set: $\mathrm{V}(T_{k}) \cup ... \mathrm{V}(T_{l}) \cup \{p\}$.
Note that the triple $(\mathcal{K}_{\mathrm{left}},\mathcal{K}_{\mathrm{right}},p)$
satisfies the requirements of Lemma \ref{thm:pivot_lemma}. That completes the proof.
\end{proof}

\subsection{Fast Approximate Tree-Field Integrators}
\label{sec:approx}

If matrices $\mathbf{M}=[f(x_{i}+y_{j})]_{i=1,...,a}^{j=1,...,b}$ from Sec. \ref{sec:ldr} do not support fast matrix-vector multiplication, the question arises whether fast approximate procedures can be applied.

\subsubsection{Random Fourier Features (RFFs)}
\label{sec:rffs}
Assume that the Fourier Transform (FT) of $f$ exists and denote it by $\tau:\mathbb{C} \rightarrow \mathbb{C}$. Note that $f$ is the inverse FT of $\tau$ and can be re-written as $f(z) = \int_{\mathbb{R}} \exp(2 \pi \mathbf{i} \omega z) \tau(\omega) d\omega.$ 
Therefore, the following holds: 
\vspace{-1mm}
{\small \begin{align}
\begin{split}
f(x_{i}+y_{j}) = \int_{\mathbb{R}} \exp(2\pi \mathbf{i} \omega x_{i})\exp(2\pi \mathbf{i} \omega y_{j}) \tau(\omega) d\omega.
\end{split}
\end{align}}
We conclude that for any probabilistic distribution $\mathcal{P}$ on $\mathbb{R}$ with pdf $p$, $f(x_{i}+y_{j})$ can be re-written as: $f(x_{i}+y_{j}) = \mathbb{E}[\mu(x_{i})^{\top}\mu(y_{j})]$, where random $\mu:\mathbb{R} \rightarrow \mathbb{R}^{m}$ is given as: $\mu(t)^{\top} = \frac{1}{\sqrt{m}}\left(\sqrt{\frac{\tau(\omega_{l})}{p(\omega_{l})}}\exp(2\pi \mathbf{i}\omega_{l} t)\right)_{l=1}^{m}$ for $\omega_{1},...,\omega_{m} \sim \mathcal{P}$ and $m \in \mathbb{N}_{+}$. Thus matrix $\mathbf{M}$ can be unbiasedly approximated as:
$\mathbf{M} \approx \mathbf{U}\mathbf{W}^{\top}$ for $\mathbf{U} \in \mathbb{R}^{a \times m}$, $\mathbf{W} \in \mathbb{R}^{b \times m}$ with rows given by $\mu(x_{1})^{\top},...,\mu(x_{a})^{\top}$
and $\mu(y_{1})^{\top},...,\mu(y_{b})^{\top}$ respectively. Matrix-vector product $\mathbf{Mv}$ can be then unbiasedly approximated as $\mathbf{U}(\mathbf{W}^{\top}\mathbf{v})$ and computed in time $O((a+b)m)$. For $m \ll \frac{ab}{a+b}$, substantial computational gains are obtained. In particular, if $m=O(\log^{d}(a+b))$, the approximate $f$-integration is conducted in time $O(N\log^{d+1}(N))$. Note that $m$ controls estimator's variance, thus decreasing $m$ increases the error.

\subsubsection{Non-Uniform FFT (NU-FFT)}
\label{sec:nufft}
We will now propose a closely-related method, that relies on the non-uniform FFT (NU-FFT).\footnote{See \citep{greengard} for an excellent introduction.}

Denote: $\mathbf{g} = \mathbf{Mv}$ for a given $\mathbf{v}=(v_{1},...,v_{b})^{\top} \in \mathbb{R}^{b}$. Define: $g(x)=\int_{R} f(x-z)P(z)dz$, where $P$ is given as: $P(z)=\sum_{j=1}^{b} v_{j} \delta(z-z_j)$, and furthermore: (1) $\delta$ is a \textit{delta}-\textit{Dirac} function, (2) $z_{j}=-y_{j}$. Our goal is to efficiently evaluate function $g$ in points: $\{x_{1},...,x_{a}\}$. 

Assume that the inverse FT of $g$ exists and denote it by $\eta:\mathbb{C} \rightarrow \mathbb{C}$. Note that $g$ is the FT of $\eta$ and can be written as: $g(x)=\int_{\mathbb{R}} \eta(\omega)\exp(-2\pi\mathbf{i}\omega x) d\omega$. Since $g$ is also a convolution of $f$ and $P$, $\eta$ is a product of the inverse FTs: $\rho$ and $R$ 
respectively. Therefore, we can write: $g(x)=\int_{\mathbb{R}} \rho(\omega)R(\omega)\exp(-2\pi \mathbf{i}\omega x) d\omega$, where $R(\omega)=\sum_{j=1}^{b} v_{j}\exp(2\pi \mathbf{i}\omega z_{j})$. Now, function $g$ can be evaluated for $\{x_{1}, \ldots, x_{a}\}$ as follows: (1) a quadrature method is applied to obtain points: $\omega_{1},...,\omega_{r}$ (and corresponding weights) for the approximate computation of the integral defining $g$, (2) the NU-FFT is applied to compute $R(\omega)$ simultaneously in those points in polylog-linear time, (3) given pre-computed $(\rho(\omega_{i})R(\omega_{i}))_{i=1}^{r}$ (and the quadrature weights), NU-FFT is applied again to compute quadrature-based approximation of $g$. 

The $f$-integration process applying this method runs in polylog-linear time since the computation of $\mathbf{g}=\mathbf{Mv}$ takes polylog-linear time. A prominent application is $f$ given as: $f(x)=\frac{\sin(x)}{x}$, with $\rho$ being a renormalized indicator of belonging to interval $[-0.5, 0.5]$. 
In this setting, the integral defining $g$ is thus limited to 
$[-0.5, 0.5]$. Interestingly, for $f(x)=\frac{\sin(x)}{x}$ we can also apply methods from Sec. \ref{sec:ldr} (see: our discussion below on the trigonometric case).

\subsubsection{Additional implications of Lemma \ref{lemma:cordial}}
\label{sec:additional_implications}

\paragraph{Products of exponentials and polynomials:}
Note that a Hadamard (element-wise) product of two outer-product matrices is itself an outer-product matrix. Using the analysis from the polynomial and exponential cases, we conclude that $\mathbf{M}$ is a sum of a constant number of terms, each being an outer-product matrix. Thus the same conclusion follows.
\paragraph{The case of the trigonometric $f$:} If $f(x)=\cos(x)$ then it can be re-written as: $f(x) = \frac{\exp(\mathbf{i}x)+\exp(-\mathbf{i}x)}{2}$. Observe that the cordiality property is preserved under linear combination of the finite number of cordial functions. We can thus conclude that analogous results as the above for $f(x)=\exp(\lambda x)$ can be derived for $f(x)=\cos(x)$. That is also the case for $f(x)=\sin(x)$ that can be re-written as: $f(x) = \frac{\exp(\mathbf{i}x)-\exp(-\mathbf{i}x)}{2\mathbf{i}}$. In both cases, we extend the domain from $\mathbb{R}$ to $\mathbb{C}$, but this does not affect the analysis.

So far we have not put any restrictions on the tree weights.
If we restrict all weights to be the same (without loss of generality, equal to one), 
then the problem becomes easier. In this case for any function $f$, matrices $\mathbf{C}$ and $\mathbf{C}^{\top}$ are Hankel \citep{hankel} (constant on each anti-diagonal and belonging to LDR class). 
Then, matrix-vector multiplication can be done in $O((a+b)\log(a+b))$. 
The analysis from the proof of Lemma \ref{lemma:cordial} for $d=1$ can be repeated. We conclude that $f$-integration can be conducted in $O(N\log^{2}(N))$ time for $N$-vertex unweighted trees and any $f:\mathbb{R} \rightarrow \mathbb{R}$. This was already proven in \citep{topvit}.  
\vspace{-3mm}
\paragraph{Trees with positive rational weights:} Assume that tree weights take values of the form: $\{\frac{e}{q}: e \in \{1,...,p\}\}$ for some $p,q \in \mathbb{N}_{{+}}$. Then, matrices $\mathbf{C}$ and $\mathbf{C}^{\top}$ do not need to be Hankel, but can be embedded into Hankel matrices with rows/columns corresponding to distances $\frac{l}{q}$ from the pivot, where $l=\{0,...,mp\}$ and $\frac{mp}{q}$ is the largest distance between a vertex and the pivot. 
Tensor  $\mathbf{X}$ can also be padded into a larger one with extra rows/columns (corresponding to unrealized distances) set to zero. If $p$ is constant, the asymptotic time complexity remains the same as in the previous case, but the algorithm might not be practical since the number of rows and columns grows by a multiplicative factor of $p$. For certain non-cordial $f$, the algorithm can be modified for potential gains.

\section{Additional Related Work}~\label{sec:rel_work_appendix}
In this section we provide additional related works. One of the methods to tackle this problem is via iterative methods~\citep{kurtispeng} like Arnoldi iteration~\citep{Arnoldi1951ThePO}, Conjugate Gradient~\citep{cg} and the celebrated Spielman-Teng algorithm~\citep{spielman2012nearlylinear} for symmetric diagonally dominant (SDD) matrices. There are a number of extensions and variations of the above methods~\citep{blelloch2011near, boman2008solving, christiano2010electrical, Koutis2007ALW, spielman2008local, daitch2008faster, Koutis2008GraphPI}.They mainly take into account the structure of the matrix (SDD)~\citep{koutis2010approaching, koutis2011nearlymlogn, kurtispeng}, embedding of a graph into low stretch spanning trees~\citep{elkin2005lowerstretch}, graph sparsification~\citep{spielman2010spectral} and the choice of a good \textit{pre-conditioner}~\citep{Maggs2003SolvingSD, koutisprecond}. 
We want to emphasize that the research on low stretch trees for general graphs is orthogonal to the main topic of this work. In our manuscript, we show in particular how to conduct efficient integration on arbitrary trees. Thus our work can be naturally combined with those algorithms to leverage all the above low stretch tree constructions for a better approximation of the graph's metric.

The other class of method comes from the celebrated work of~\citep{mohy} and there are a number of extensions of this work~\citep{kloster, mohy2, MOORE2011537, moler, cheby}. 

Another class of methods is via sampling, where one samples a subset of a large matrix, which is then used to approximate the matrix-vector multiplication (i.e. Monte Carlo sampling) methods~\citep{fm, drineas, acebron2019monte, acebron2019highly, Benzi2017AnalysisOM, martinsson2019randomized}.

We note that none of these methods are directly applicable in our cases as our $f$-matrix is neither Hermitian or SDD. The randomized algorithms are harder to use in the setting of training of a neural network. Moreover our method is \textit{exact} on \textit{trees}, where all the above methods are approximations.


\section{Topological Transformers}~\label{sec:top_vit}

\begin{algorithm}[H]
\caption{General Efficient Low-Rank Masked Attention from \cite{topvit}}
\textbf{Input:}  Query/key matrices: $\mathbf{Q},\mathbf{K} \in \mathbb{R}^{L \times d_{QK}}$, value matrix $\mathbf{V} \in \mathbb{R}^{L \times d}$, mask $\mathbf{M} \in \mathbb{R}^{L \times L}$, procedure $\mathrm{FastMult}_{\mathbf{M}}:\mathbb{R}^{L} \rightarrow \mathbb{R}^{L}$ calculating $\mathbf{Mx}$ (or its approximation) for the input $\mathbf{x} \in \mathbb{R}^{L}$, kernel feature map: $\phi:\mathbb{R}^{d_{QK}} \rightarrow \mathbb{R}^{m}$.  $\mathrm{vec}(\cdot)$ denotes vectorization. \; \\
\textbf{Output:} Masked low-rank attention embeddings using $\phi$. \; \\
1. Compute matrices $\mathbf{V}^{1} \in \mathbb{R}^{L \times (md)}$, $\mathbf{V}^{2} \in \mathbb{R}^{L \times m}$ with rows defined as:
$\mathbf{V}^{1}_{i:}=\mathrm{vec}(\phi(\mathbf{k}_{i}^{\top})\mathbf{v}_{i})$, $\mathbf{V}^{2}_{i:}=\phi(\mathbf{k}_{i}^{\top})^{\top}$, where $\mathbf{k}_{i}$/$\mathbf{v}_{i}$ stands for the ith row of $\mathbf{K}$/$\mathbf{V}$. \; \\
2. $\tilde{\mathbf{D}}^{1} :=[{\color{blue}\mathrm{FastMult}_{\mathbf{M}}(\mathbf{V}^{1}_{:1}),...,\mathrm{FastMult}_{\mathbf{M}}(\mathbf{V}^{1}_{:md})}] \in \mathbb{R}^{L \times md}$, \\ \qquad  $\tilde{\mathbf{D}}^{2} := [{\color{blue}\mathrm{FastMult}_{\mathbf{M}}(\mathbf{V}^{2}_{:1}),...,\mathrm{FastMult}_{\mathbf{M}}(\mathbf{V}^{2}_{:m})}] \in \mathbb{R}^{L \times m}$ for $\mathbf{V}^{1/2}_{:i}$ denoting ith column of $\mathbf{V}^{1/2}$.\; \\
3. Output the embedding $\mathbf{r}_{i}$ of the ith tokens as:
$\mathbf{r}_{i} = \frac{\phi(\mathbf{q}_{i}^{\top})^{\top}\mathrm{devec}(\tilde{\mathbf{D}}^{1}_{i:})}{\phi(\mathbf{q}_{i}^{\top})^{\top}(\tilde{\mathbf{D}}^{2}_{i:})^{\top}}$, where $\mathbf{q}_{i}$ is the ith row of $\mathbf{Q}$ and $\mathrm{devec}(\cdot)$ devectorizes its input back to $\mathbb{R}^{m \times d}$. 
\label{alg:main}
\end{algorithm}

We now recall the formulation of general masked transformers. 

Let us denote by $L$ the number of input tokens. The attention used in a regular Transformer linearly projects their representations into three learnable matrices $\mathbf{Q}, \mathbf{K} \in \mathbb{R}^{L \times d_{QK}}$, $\mathbf{V} \in \mathbb{R}^{L \times d}$ called \textit{queries}, \textit{keys} and \textit{values} respectively. 

\begin{definition}[general masked attention]
\label{gen_graph_attention}
\textit{General masked attention} is given by the following equation, where $\mathbf{M} \in \mathbb{R}^{L \times L}$ is the \textit{mask matrix}, and $\mathbf{A} \in \mathbb{R}^{L \times L}$ is the so-called \textit{masked attention matrix} (MAM):
 which is defined as:
\begin{align}
\label{eq:attnorm2}
\begin{split}
    \mathrm{Att}_{\mathrm{K}}(\mathbf{Q}, \mathbf{K}, \mathbf{V},\mathbf{M}) = \mathbf{D}^{-1} \mathbf{A} \mathbf{V},  \\
    \mathbf{A} = \mathbf{M} \odot \mathrm{K}(\mathbf{Q},\mathbf{K}), \quad \mathbf{D} = \mathrm{diag} ( \mathbf{A} \mathbf{1}_L ), 
\end{split}
\end{align}
where $\odot$ denotes the element-wise (Hadamard) matrix product,  $\mathrm{K}:\mathbb{R}^{d} \times \mathbb{R}^{d} \rightarrow \mathbb{R}$ is some kernel function and $\mathrm{K}(\mathbf{Q},\mathbf{K})$ is a kernel matrix defined as: $\mathrm{K}(\mathbf{Q},\mathbf{K})_{i,j} = \mathrm{K}(\mathbf{q}_{i}^{\top},\mathbf{k}_{j}^{\top})$ for the $ith$ row $\mathbf{q}_{i}$ of $\mathbf{Q}$ and the jth row $\mathbf{k}_{j}$ of $\mathbf{K}$ respectively.
We call $\mathbf{A}^{\prime} = \mathrm{K}(\mathbf{Q},\mathbf{K})$ the unmasked attention matrix (UAM). Note that when $\mathrm{K}$ is the softmax function, we recover the well-known attention mechanism in Transformers.

\end{definition}

Here $\mathbf{1}_L$ is the all-ones vector of length $L$, and $\mathrm{diag} (\cdot)$ is a diagonal matrix with the input vector as the diagonal. The time complexity of computing (\ref{eq:attnorm2}) is $O(L^2 d)$.

If the kernel $\mathrm{K}$ admits (at least in expectation) a dot-product decomposition, i.e. 
$
\mathrm{K}(\mathbf{x}, \mathbf{y}) = \mathbb{E}[\phi(\mathbf{x})^{\top}\phi(\mathbf{y})]
$
for some mapping: $\phi: \mathbb{R}^{d_{QK}} \rightarrow \mathbb{R}^{m}$ (and some $m >0$).
$\phi(\mathbf{u})$ is called a \textit{(random) feature map} (RFM) for $\mathbf{u} \in \mathbb{R}^{d}$. 
For $\mathbf{Q}^{\prime},\mathbf{K}^{\prime} \in \mathbb{R}^{L \times m}$ with rows given as $\phi(\mathbf{q}_{i}^{\top})^{\top}$ and $\phi(\mathbf{k}_{i}^{\top})^{\top}$ respectively,
RFM-based kernel linearization leads directly to the efficient unmasked attention mechanism of the form:
\begin{align}
\begin{split}
    \widehat{\mathrm{Att}_\mathrm{K}} (\mathbf{Q}, \mathbf{K}, \mathbf{V}) = \widehat{\mathbf{D}}^{-1} (\mathbf{Q}^{\prime}((\mathbf{K}^{\prime})^{\top} \mathbf{V})), \\
    \quad \widehat{\mathbf{D}} = \mathrm{diag} (\mathbf{Q}^{\prime}((\mathbf{K}^{\prime})^{\top} \mathbf{1}_L) ). \label{performers_attention}
\end{split}    
\end{align} 
Here $\widehat{\mathrm{Att}_{\mathrm{K}}}$ stands for the approximate attention and brackets indicate the order of computations. Such a mechanism is characterized by time complexity $O(Lmd)$ as opposed to $O(L^{2}d)$ for regular attention. If $m \ll L$, computational gains are obtained. 

The central question in~\citep{topvit} was how to incorporate the masking in the linear attention as above. Note that in this case $\mathbf{A'}$ is never materialized. Building on the work of~\citep{pretopvit}, the authors~\citep{topvit} propose a general algorithm that efficiently implements masked linear attention. 

In this work, we use different mappings $\phi$ (see Table~\ref{tab:tts}). Our key contribution in this work is to propose a novel mask matrix $\mathbf{M}$ and the implementation of a fast matrix multiplication by $\mathbf{M}$. The above result then allows us to construct novel classes of Topological Transformers.

\section{Experimental Details and Additional Experiments}~\label{sec:expt_appendix}
In this section, we provide additional details regarding the experimental setup and present additional results from our experiments. Our code is available at \url{https://github.com/brcsomnath/FastTreeIntegrator}. Specifically, we provide there the code for: (1) our algorithm leveraging IntegratorTree data structure (depicted in Fig~\ref{fig:IntTree}), (2) adaptation to the Gromov-Wasserstein-type computation, (3) graph classification and (4) experiments on interpolation on meshes. 

\subsection{Additional experiments for graph metric approximation with $f$-distance matrices}
\label{sec:ag_metrics}

We present additional results for the training loss, relative Frobenius Norm Error ($\epsilon$), for more samples from the Thingi10K dataset (to complement the results in Fig. \ref{fig:opt_main}). In Fig. \ref{fig:loss-poly}, we observe that in most cases having rational functions with higher polynomial degrees results in lower training loss.

We also perform similar experiments for graph classification on the CUBES dataset~\cite{hanocka2019meshcnn}. Specifically, we investigate how the polynomial degree affects the graph classification performance in Fig.~\ref{fig:loss-poly} (left). We observe that increasing the polynomial degree improves the classification accuracy up to a certain degree. For the same dataset, we also compute the training loss for different polynomial degrees in Fig.~\ref{fig:loss-poly} (right). Similarly, we observe that higher-degree rational functions achieve lower training loss for fitting the polynomial coefficients.

\begin{figure*}[t]
    \centering
    \includegraphics[width=.9\linewidth]{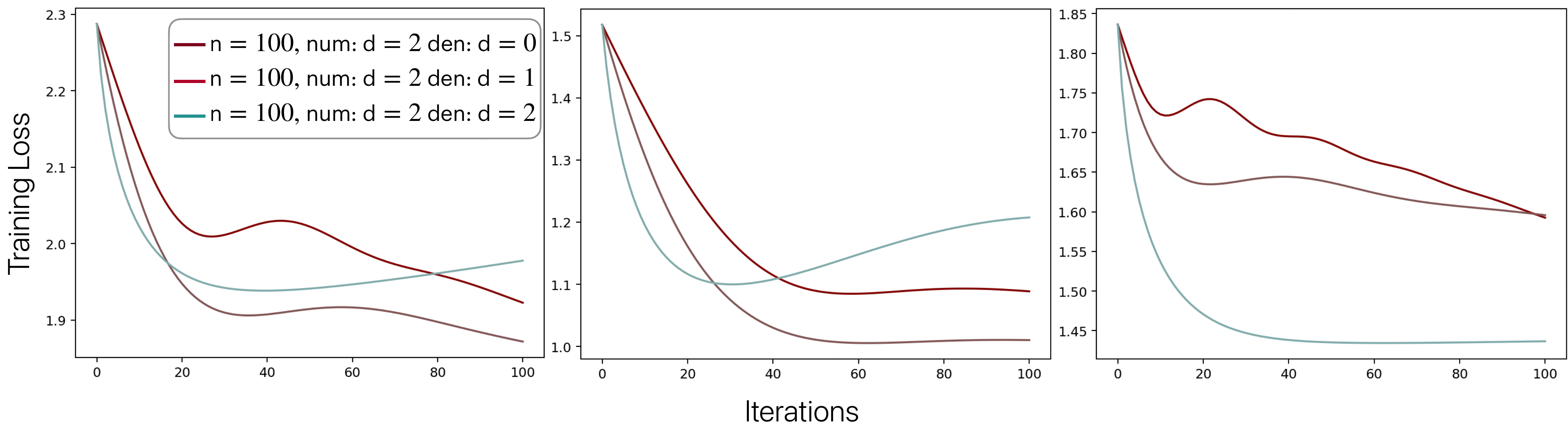}
    \caption{Relative Frobenius norm error as a function of the number of training iterations for different sizes $n$ and learnable quadratic $f$. We report the results for 3 mesh graphs from \href{https://ten-thousand-models.appspot.com/}{Thingi10k}.}
    \label{fig:opt_main_appendix}
\end{figure*}

\begin{figure}[t!]
    \centering
    \includegraphics[width=0.3\textwidth, keepaspectratio]{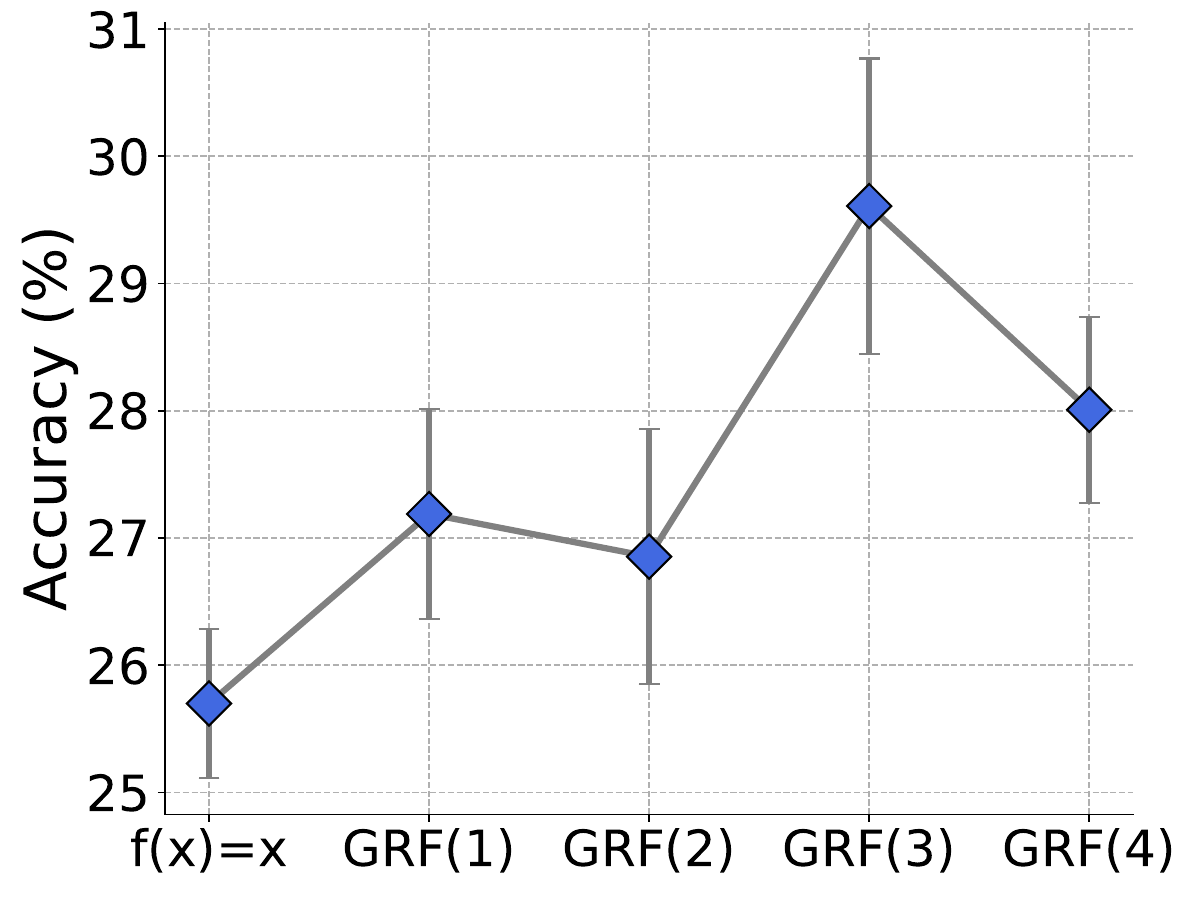}
    \includegraphics[width=0.3\textwidth, keepaspectratio]{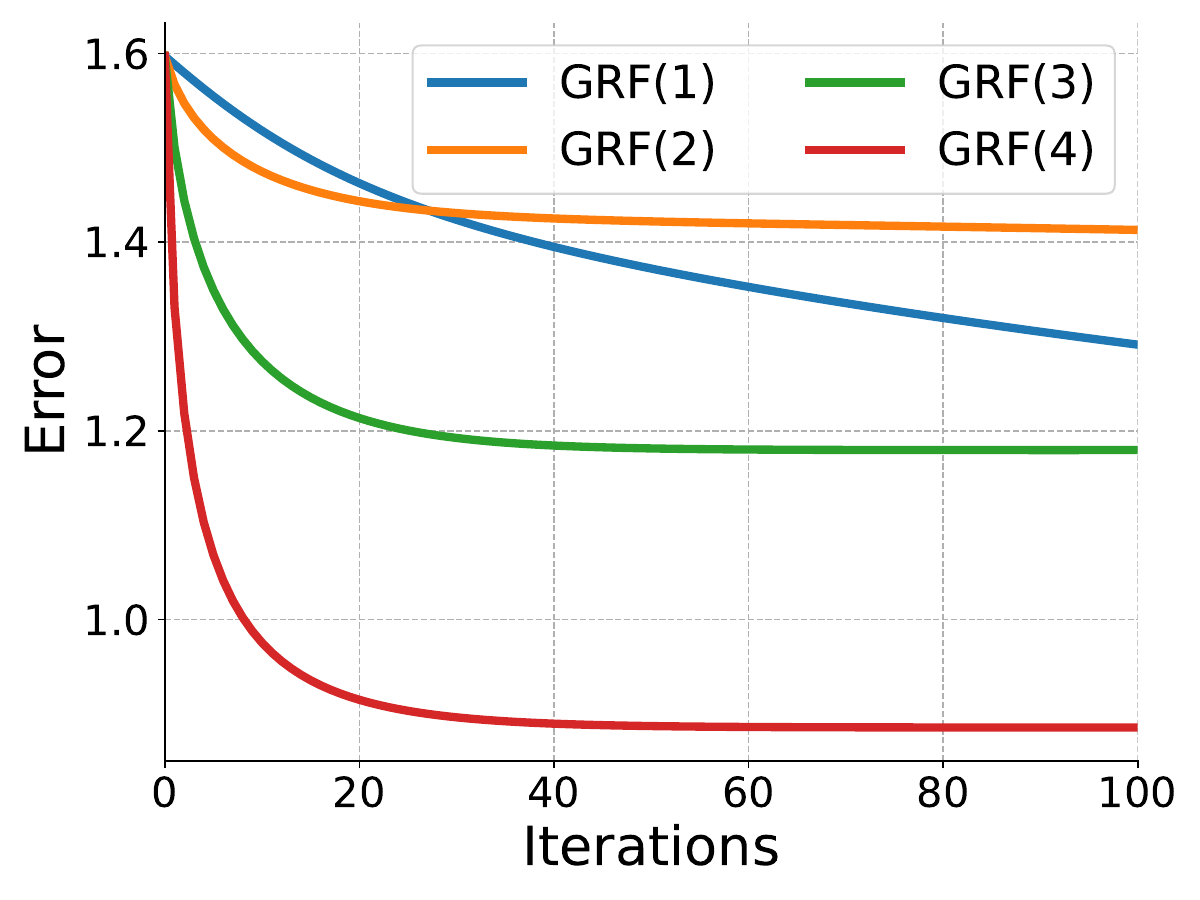}
    \caption{\textbf{Left}: Variation in FTFI performance with different $f$-distance functions on the CUBES dataset. We use general rational functions (GRF) of varying polynomial degrees. GRF($i$) indicates a rational function of the $i$-th degree. We observe a general trend of accuracy increase with function complexity up to a certain degree. The coefficients of the GRF were learnt using a few graph instances. \textbf{Right}: We show the training loss curves for estimating the coefficients of the rational function, $f$, for samples in the CUBES dataset. We report the training loss for rational functions with varying polynomial degrees. We observe that the training loss is lower when we use rational functions with high-degree polynomials.}
    \label{fig:loss-poly}
\end{figure}

Moreover, we benchmark FTFI on ModelNet10~\citep{wu20153d}, a dataset for 3D Point Cloud (PC) classification. For each PC, we create an $\epsilon$-neighborhood-graph and use FTFI for graph classification 
The Shortest Path kernel achieves an accuracy of $39.6\%$, whereas our FTFI with the degree-2 polynomial improves the accuracy to $44.2$\% ($10$\% relative improvement over the baseline), similar to the observation in~\ref{fig:loss-poly}.
\subsection{Integration of FTFI into GW-style algorithms}
\label{sec:gw}

\begin{figure}[h]
    \centering
    \includegraphics[width=.4\columnwidth]{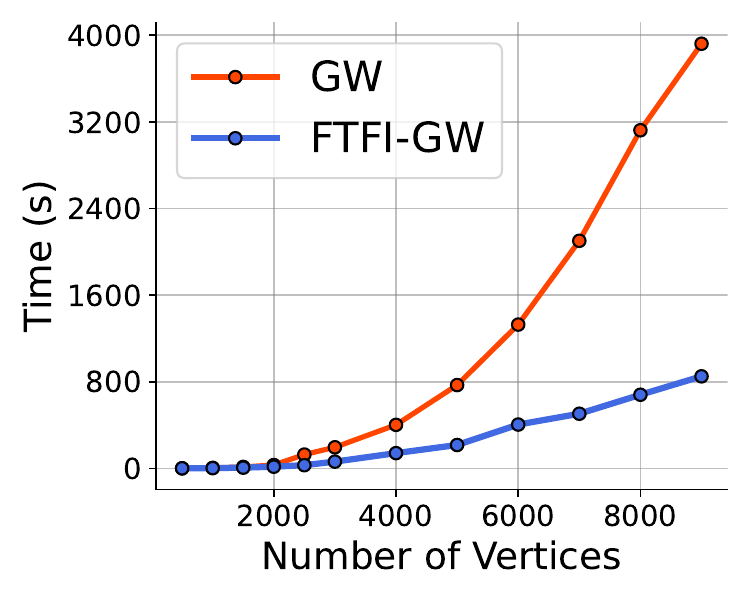}
    \caption{Comparison of field integration time between GW and FTFI-GW. We observe that FTFI achieves significant computation time gain over the baseline. }
    \label{fig:gw}
\end{figure}

Wasserstein distance has found many uses in ML, particularly due to it's principled approach to compare probability distributions. Gromov Wasserstein~\cite{memoli2011gromov} discrepancy is an extension of Wasserstein distance to graph structured data, with a lot of downstream applications like graph clustering and classification. Inspired by the work of~\citep{pcchoro}, we follow the exact same procedure in the integration of FTFI in the conditional gradient algorithm. The FTFI can be injected seamlessly in place of the Fast Matrix Multiplication (FMM) algorithms in Algorithm 2 and Algorithm 3 (see~\citep{pcchoro}).

 Our method GW-FTFI run consistently $2$-$6$x faster than the baseline methods using the Shortest Path kernel, with \textit{no drop} in accuracy in computing the associated costs (Figure~\ref{fig:gw}). The plots shown are obtained by averaging over $10$ seeds and random trees of various sizes. For the baseline experiments, we use the implementation from the POT library~\citep{flamary2021pot}.

\subsection{Interpolation on Meshes}~\label{sec:mesh_interpolation_app} 
In this section, we present implementation details for the mesh interpolation experiments in \cref{sec:interpolation_meshes}. All experiments were run on a computer with an i9-12900k CPU and 64GB memory. 

In the vertex normal prediction task in \cref{sec:interpolation_meshes}, we choose 40 meshes for 3D-printed objects with a wide range of size from the Thingi10K \citep{Thingi10K} dataset with the File IDs:

\small
\texttt{[60246, 85580, 40179, 964933, 1624039, 91657, 79183, 82407, 40172, 65414, 90431, 74449, 73464, 230349, 40171, 61193, 77938, 375276, 39463, 110793, 368622, 37326, 42435, 1514901, 65282, 116878, 550964, 409624, 101902, 73410, 87602, 255172, 98480, 57140, 285606, 96123, 203289, 87601, 409629, 37384, 57084]}
\normalsize 

For both our FTFI and the baseline BFFI methods, we do a grid-search over the hyper-parameter $\lambda$ for each mesh and report the pre-processing time associated with the hyper-parameter(s) that give(s) us the best cosine similarity.

\subsection{Additional Details on Graph Classification}~\label{sec:gr_class_appendix}
\begin{table}[t]
\caption{Statistics of the graph classification datasets used in this paper. }
\label{tab:graph_data_stats}
\centering
\resizebox{0.75\textwidth}{!}{%
\begin{tabular}{@{}lrccccc@{}}
\toprule
 &  &  & Avg. & Avg.  & \# Node  & \# Node  \\ 
 \textsc{Datasets} & \# Graphs & \# Labels & \# Nodes & \# Edges & Labels & Attributes\\
 \midrule
\textsc{Mutag}            & 188   & ~2  & ~~17.93  & ~~~~19.79     & ~~7  & ~~-  \\
\textsc{Ptc-Mr}           & 344   & ~2  & ~~14.29  & ~~~~14.69     & 19 & ~~-  \\
\textsc{Enzymes}          & 600   & ~6  & ~~32.63  & ~~~~62.14     & ~~3  & 18 \\
\textsc{Proteins}         & 1113  & ~2  & ~~39.06  & ~~~~72.82     & ~~3  & ~~1  \\
\textsc{D\&D}             & 1178  & ~2  & 284.32 & ~~715.66    & 82 & ~~-  \\
\textsc{Imdb Binary}      & 1000 & ~2  & ~~19.77  & ~~~~96.53     & ~~-  & ~~-  \\
\textsc{Imdb Multi}       & 1500  & ~3  & 13.0     & ~~~~65.94     & ~~-  & ~~-  \\
\textsc{NCI1}             & 4110  & ~2  & ~~29.87  & ~~~~32.30     & 37 & ~~-  \\
\textsc{Collab}           & 5000  & ~3  & ~~74.49  & 2457.78 & ~~-  & ~~-  \\
\textsc{Reddit Binary}    & 2000  & ~2  & 429.63 & ~~497.75    & ~~-  & ~~-  \\
\textsc{Reddit Multi-5k}  & 4999  & ~5  & 508.52 & ~~594.87    & ~~-  & ~~-  \\
\textsc{Reddit Multi-12k} & 11929 & 11 & ~391.41 & ~~456.89    & ~~-  & ~~-  \\ \bottomrule
\end{tabular}%
}
\end{table}

\begin{table}[t!]
    \centering
    \caption{Feature processing time of FTFI compared to exact shortest path kernel computation. We observe that FTFI achieves significant speedups up to 90\% reduction in processing time. All times are reported in seconds (s).}
    \label{tab:time}
    \resizebox{0.75\textwidth}{!}{%
\begin{tabular}{l c c c c c c c}
    \toprule[1pt]
     & \multicolumn{6}{c}{\textsc{Datasets}}\\
    \cmidrule{2-7}
     & \textsc{IMDB} & \textsc{IMDB} & \textsc{Reddit} & \textsc{Reddit} & \textsc{Reddit} & \multirow{2}{*}{\textsc{Collab}} \\ 	
     Algorithm & \textsc{Binary} & \textsc{Multi} & \textsc{Binary} & \textsc{Multi-5K} & \textsc{Multi-12K} &   \\ 			
    \midrule[1pt]
   BGFI & 5.6 & 7.6 & 3371.9 & 6267.6 & 8086.3 & 209.1\\
   FTFI & 4.3 & 4.7 & ~~338.2 & ~~755.3 & 1959.5 & 232.4\\
   \midrule
   {Improvement} & \textcolor{ao}{+23.2\%} & \textcolor{ao}{+38.2\%} & ~~~~\textcolor{ao}{+90.0\%} & ~~~~\textcolor{ao}{+88.0\%} & ~~~~\textcolor{ao}{+75.8\%} & ~~~\textcolor{darkpastelred}{-11.1\%}\\
   \bottomrule
\end{tabular}
}
\resizebox{0.75\textwidth}{!}{%
\begin{tabular}{l c c c c c c c}
    \toprule[1pt]
     & \multicolumn{6}{c}{\textsc{Datasets}}\\
    \cmidrule{2-7}
     Algorithm & \textsc{Mutag} & \textsc{Enzymes} & \textsc{NCI1} & \textsc{Ptc-mr} & \textsc{D\&D} & \textsc{Proteins}\\ 			
    \midrule[1pt]
   BGFI & ~~0.88 & ~3.68 & 32.8 & 0.89 & 715.4 & 14.9 \\
   FTFI & ~~0.37 & ~4.39 & 20.2 & 0.93 & 325.3 & 18.6\\
   \midrule
   {Improvement} & \textcolor{ao}{+58.0\%} & \textcolor{darkpastelred}{-19.3\%} & ~\textcolor{ao}{+38.4\%} & \textcolor{darkpastelred}{-4.5\%}& ~~\textcolor{ao}{+54.5\%} & ~~\textcolor{darkpastelred}{-24.8\%}\\
   \bottomrule
\end{tabular}
}
\end{table}
We conduct graph classification experiments on a wide range of benchmark datasets. We report the dataset statistics for the graph classification datasets in Table~\ref{tab:graph_data_stats}. More details about the datasets are available in~\cite{Morris+2020}. To evaluate the performance of the different kernels, we employ the
framework proposed by~\citep{errica_fair_2020}. In particular, 10-fold cross-validation is used
to obtain an estimate of the generalization performance of our method and the baseline method. We repeat this cross validation experiment 5 times to get a robust estimation and report the standard deviation for each setup. 

To obtain graph features, we follow the approach presented in~\citep{delara2018simple}. In this setting, we obtain the $k$-smallest eigenvalues from the approximated kernel from FTFI and forward these features to a random forest classifier for classification. For BGFI, we perform the same process obtaining the $k$-smallest eigenvalues from the exact shortest kernel. FTFI achieves similar performance to the BGFI while being significantly faster. We tune the hyperparameter $k$ independently for each method.

In Table~\ref{tab:kernel_res},  we report the results for a wide range of baselines and compare FTFI. We observe that FTFI achieves competitive performance among various strong kernel-based classification baseline approaches. Note that FTFI results are not directly comparable with other approaches, as FTFI constructs an intra-graph kernel while other methods use inter-graph kernels. Despite the aforementioned considerations, we contend that positioning our results within the broader framework of alternative methodologies demonstrates that FTFI remains a compelling approach, owing to its speed and comparable classification accuracy.

\begin{table}[t!]
    \centering
    \caption{Comparison of FTFI with a broad range of graph kernel-based classification approaches. We observe that FTFI achieves performance similar to that of Exact SP, its exact counterpart, across almost all datasets. The baseline results have been compiled from \citet{Nikolentzos_2021}.  \textcolor{red}{OOT} and \textcolor{red}{OOM} indicate that the corresponding algorithm ran out of time or memory respectively.}
    \label{tab:kernel_res}
    \begin{tabular}{ l c c c  c c c} 
    \toprule[1pt]
     & \multicolumn{6}{c}{\textsc{Datasets}}\\
    \cmidrule{2-7}
     & \textsc{IMDB} & \textsc{IMDB} & \textsc{Reddit} & \textsc{Reddit} & \textsc{Reddit} & \multirow{2}{*}{\textsc{Collab}}  \\ 	
     Algorithm & \textsc{Binary} & \textsc{Multi} & \textsc{Binary} & \textsc{Multi-5K} & \textsc{Multi-12K} &   \\ 			
    \midrule[1pt]
    VH & 50.0 \light{(± 0.0)} & 33.3 \light{(± 0.0)} & 50.0 \light{(± 0.0)} & 20.0 \light{(± 0.0)} & 21.7 \light{(± 1.5)} & 52.0 \light{(± 0.1)} \\
    RW & 64.1 \light{(± 4.5)} &  44.6 \light{(± 4.1)} & \textcolor{red}{OOT} &  \textcolor{red}{OOT} & \textcolor{red}{OOT} & 68.0 \light{(± 1.7)} \\
    SP & 58.2 \light{(± 4.7)} & 39.2 \light{(± 2.3)} & 81.7 \light{(± 2.5)} & 47.9 \light{(± 1.9)} & \textcolor{red}{OOT} & 58.8 \light{(± 1.2)} \\
    GR & 66.1 \light{(± 2.7)} & 39.5 \light{(± 2.7)} & 76.1 \light{(± 2.6)} & 34.7 \light{(± 2.0)} & 23.0 \light{(± 1.4)} &73.0 \light{(± 2.0)} \\
    WL-VH & 70.7 \light{(± 6.8)} & 51.3 \light{(± 4.4)} & 67.8 \light{(± 3.5)} & 50.5 \light{(± 1.6)} & 38.7 \light{(± 1.7)} & 78.3 \light{(± 2.1)} \\
    WL-SP & 58.2 \light{(± 4.7)} & 39.2 \light{(± 2.3)} & \textcolor{red}{OOT} & \textcolor{red}{OOT} & \textcolor{red}{OOT} & 58.8 \light{(± 1.2)}\\
    WL-PM & 73.6 \light{(± 3.4)} & 49.1 \light{(± 5.5)} & \textcolor{red}{OOM} & \textcolor{red}{OOM} & \textcolor{red}{OOM} & \textcolor{red}{OOM} \\
    WL-OA & 72.6 \light{(± 5.5)} & 51.1  \light{(± 4.3)} & 89.0 \light{(± 1.3)} & 54.0 \light{(± 1.2)} & \textcolor{red}{OOT} & 80.5 \light{(± 2.0)} \\
    NH & 71.6 \light{(± 4.5)} & 50.5 \light{(± 5.0)} & 81.2 \light{(± 2.0)} & 49.9 \light{(± 2.4)} & 39.6 \light{(± 1.4)} & 81.1 \light{(± 2.4)}\\
    NSPDK & 67.4 \light{(± 3.3)} & 44.6 \light{(± 3.8)}&  \textcolor{red}{OOT} & \textcolor{red}{OOT} & \textcolor{red}{OOT} & \textcolor{red}{OOT}\\
    Lo-$\vartheta$ & 51.0 \light{(± 4.2)} & 39.8 \light{(± 2.6)} & \textcolor{red}{OOT} & \textcolor{red}{OOT} & \textcolor{red}{OOT} & \textcolor{red}{OOT} \\
    SVM-$\vartheta$ & 52.3 \light{(± 4.0)} & 39.5 \light{(± 2.7)} & 74.8 \light{(± 2.6)} & 31.4 \light{(± 1.1)} & 22.9 \light{(± 0.9)} & 52.0 \light{(± 0.1)}\\
    ODD-STh & 65.0 \light{(± 4.0)} & 46.7 \light{(± 3.4)} & 52.1 \light{(± 3.2)} & 43.1 \light{(± 1.8)} & 30.0 \light{(± 1.6)} & 52.0 \light{(± 0.1)}\\
    PM & 66.3 \light{(± 4.2)} & 46.1 \light{(± 3.8)} & 86.5 \light{(± 2.1)} & 48.3 \light{(± 2.5)} & 41.1 \light{(± 0.6)} & 74.0 \light{(± 2.4)}\\
    GH &  59.4 \light{(± 3.4)} & 39.5 \light{(± 2.6)} & \textcolor{red}{OOT} & \textcolor{red}{OOT} & \textcolor{red}{OOT} & 60.0 \light{(± 1.4)}\\
    SM &  \textcolor{red}{OOT} & \textcolor{red}{OOT} & \textcolor{red}{OOM} & \textcolor{red}{OOM} & \textcolor{red}{OOM} & \textcolor{red}{OOT}\\
    PK & 51.7 \light{(± 3.7)} & 34.5 \light{(± 3.0)} & 63.9 \light{(± 3.0)} & 34.9 \light{(± 1.7)} & 23.9 \light{(± 1.2)} & 57.0 \light{(± 1.2)}\\
    ML & 69.9 \light{(± 4.8)} & 47.7 \light{(± 3.2)} & 89.4 \light{(± 2.1)} & 35.4 \light{(± 2.0)} & \textcolor{red}{OOM} & 75.6 \light{(± 1.6)}\\
    CORE-WL-VH & 73.5 \light{(± 6.1)} & 51.7 \light{(± 4.1)} & 73.0 \light{(± 4.5)} & 51.1 \light{(± 1.6)} & 40.2 \light{(± 1.8)} & 84.5 \light{(± 2.0)}\\
    CORE-SP & 68.5 \light{(± 3.9)} & 51.0 \light{(± 3.5)} & 91.0 \light{(± 1.8)} & \textcolor{red}{OOT} & \textcolor{red}{OOM} & \textcolor{red}{OOT} \\
    \midrule
    FTFI & 65.1 \light{(± 1.6)} & 46.4 \light{(± 1.9)} & 83.7 \light{(± 1.3)} & 43.8 \light{(± 2.0)} & 31.8 \light{(± 0.3)}& 63.7 \light{(± 0.3)}\\
    \rowcolor{lightmauve}
    BGFI & 65.1 \light{(± 2.0)} & 47.6 \light{(± 2.0)} & 84.3 \light{(± 3.5)} & 44.0 \light{(± 1.9)} & 37.6 \light{(± 0.3)}& 75.5 \light{(± 0.3)}\\
    \bottomrule[1pt]
\end{tabular}
    \begin{tabular}{ l c c c  c c c} 
    \toprule[1pt]
     & \multicolumn{6}{c}{\textsc{Datasets}}\\
    \cmidrule{2-7}
     Algorithm & \textsc{Mutag} & \textsc{Enzymes} & \textsc{NCI1} & \textsc{Ptc-mr} & \textsc{D\&D} & \textsc{Proteins}\\ 
    \midrule[1pt]
    VH & 69.1 \light{(± 4.1)} & 20.0 \light{(± 4.8)} & 55.7 \light{(± 2.0)} & 57.1 \light{(± 9.6)} & 74.8 \light{(± 3.7)} & 71.1 \light{(± 4.4)}\\
    RW & 81.4 \light{(± 8.9)} & 16.7 \light{(± 1.8)} & \textcolor{red}{OOT} & 54.4 \light{(± 9.8)} & \textcolor{red}{OOM} & 69.5 \light{(± 5.1)}\\
    SP & 82.4 \light{(± 5.5)} & 37.3 \light{(± 8.7)} & 72.5 \light{(± 2.0)} & 60.2 \light{(± 9.4)} & 77.9 \light{(± 4.5)} & 74.9 \light{(± 3.6)}\\
    WL-VH & 86.7 \light{(± 7.3)} & 50.7 \light{(± 7.3)} & 85.2 \light{(± 2.2)} & 64.9 \light{(± 6.4)} &  78.7 \light{(± 2.3)} & 76.2 \light{(± 3.5)}\\
    WL-SP & 81.4 \light{(± 8.7)} & 27.3 \light{(± 7.4)} & 60.8 \light{(± 2.4)} & 54.5 \light{(± 9.8)} & 76.0 \light{(± 3.5)} & 72.1 \light{(± 3.1)}\\
    WL-PM & 88.3 \light{(± 7.1)} & 57.5 \light{(± 6.8)} & 85.6 \light{(± 1.7)} & 65.1 \light{(± 7.5)} & \textcolor{red}{OOM} & 75.9 \light{(± 3.8)}\\
    WL-OA & 87.2 \light{(± 5.4)} & 58.0 \light{(± 5.0)} & 86.3 \light{(± 1.6)} & 65.7 \light{(± 9.6)} & 77.6 \light{(± 3.0)} & 76.2 \light{(± 3.9)}\\
    NH & 88.3 \light{(± 6.3)} &  54.5 \light{(± 3.6)} & 84.7 \light{(± 1.9)} & 63.4 \light{(± 9.2)} & 74.6 \light{(± 3.5)} &75.0 \light{(± 4.2)}\\
    NSPDK & 85.6 \light{(± 8.9)} & 42.2 \light{(± 8.0)} & 74.3 \light{(± 2.1)} & 59.1 \light{(± 7.3)} & 78.9 \light{(± 4.7)} & 72.5 \light{(± 2.9)}\\
    ODD-STh & 80.4 \light{(± 8.8)} & 32.3 \light{(± 4.8)} & 75.2 \light{(± 2.0)} & 59.4 \light{(± 9.8)} & 76.4 \light{(± 4.5)} &  70.9 \light{(± 4.1)}\\
    PM & 85.1 \light{(± 5.8)} & 43.2 \light{(± 5.3)} & 73.5 \light{(± 1.9)} & 60.2 \light{(± 8.2)} & 77.9 \light{(± 3.7)} &70.9 \light{(± 4.4)}\\
    GH & 82.5 \light{(± 5.8)} & 37.2 \light{(± 6.6)} & 71.0 \light{(± 2.3)} & 60.2 \light{(± 9.4)} & \textcolor{red}{OOT} & 74.8 \light{(± 2.4)}\\
    SM & 85.7 \light{(± 5.8)} & 35.7 \light{(± 5.5)} & \textcolor{red}{OOT} & 60.2 \light{(± 6.8)} & \textcolor{red}{OOM} & \textcolor{red}{OOM}\\
    PK & 76.6 \light{(± 5.2)} & 44.0 \light{(± 6.3)} & 82.1 \light{(± 2.1)} & 65.1 \light{(± 5.6)} & 77.7 \light{(± 4.2)} & 73.1 \light{(± 4.7)}\\
    ML & 87.2 \light{(± 7.5)} & 48.5 \light{(± 7.8)} & 79.7 \light{(± 1.8)} & 64.5 \light{(± 5.8)} &  78.6 \light{(± 4.0)} & 74.2 \light{(± 4.4)}\\
    CORE-WL-VH & 85.6 \light{(± 6.5)} & 51.7 \light{(± 7.0)} & 85.2 \light{(± 2.2)} & 65.5 \light{(± 5.6)} & 79.5 \light{(± 3.2)} & 76.5 \light{(± 4.4)}\\
    CORE-SP & 85.1 \light{(± 6.8)} & 39.5 \light{(± 9.3)} & 73.8 \light{(± 1.4)} & 57.3 \light{(± 9.7)} &  79.3 \light{(± 3.8)} & 76.5 \light{(± 3.9)}\\
    \midrule
    FTFI &  81.6 \light{(± 3.8)} & 34.6 \light{(± 1.0)} & 72.8 \light{(± 1.2)} & 60.6 \light{(± 2.1)} & 73.6 \light{(± 2.1)} & 72.5 \light{(± 1.2)}\\
    \rowcolor{lightmauve}
    BGFI & 82.2 \light{(± 2.8)} & 42.5 \light{(± 1.8)} & 73.7 \light{(± 1.2)} & 58.7 \light{(± 2.5)}& 74.8 \light{(± 2.1)} & 71.7 \light{(± 2.0)}\\
    \bottomrule[1pt]
\end{tabular}
\end{table}

\subsection{Additional details on experiments for Topological transformers}~\label{sec:vit_appendix}
In this subsection, we provide additional training details for our image classification tasks. Table~\ref{tab:app_vit_param} and table~\ref{tab:app_mlm_param} present the architectural as well as the training details. 

We train the ViT models starting from their pretrained checkpoint (pretrained on ImageNet-21k). We replace the dense attention in ViT by the Performer attention (see Equation~\ref{performers_attention}). We use Algorithm~\ref{alg:main} to efficiently incorporate the mask matrix $\mathbf{M}$ in the attention mechanism. 

\subsubsection{ImageNet}~\label{sec:imagenet_appendix}
We have already provided comparison with SOTA efficient-attention methods: low-rank attention Transformers in Sec 4.4, quality-wise. On standard ImageNet benchmark, our best Transformer with FTFI provide 78.15$\%$ accuracy, as compared to 76.37$\%$ of the best low-rank -attention variant (obtained by testing three different linear variants). That gives 1.78$\%$ accuracy improvement with only 3 extra trainable parameters per head (36 extra trainable parameters per layer). We have also run the experiments with cosFormer. It achieved 76.3$\%$ accuracy (consistent with what is reported in the literature, see [8]), lower than both: our method and the best tested low-rank attention variant. The RF-Gate-Gaussian achieved 76.35$\%$ accuracy, which is is still lower than both: FTFI and the best tested low-rank attention variant.

\subsubsection{Places365}~\label{sec:places365_appendix}
We have also conducted tests on another challenging dataset: Places365. In the paper, we report 1.71$\%$ accuracy improvement over low-rank attention Transformer (56.51$\%$ accuracy vs 54.8$\%$ accuracy). For the rebuttal, we also run the experiment with cosFormer which achieved 55.4$\%$ accuracy (consistent with what is reported in the literature, see: [8]). This is still 0.93$\%$ behind our method. The RF-Gate-Gaussian achieved accuracy 55.1$\%$, lower than this of cosFormer.

\subsubsection{I-naturalist 2017}~\label{sec:inat_appendix}
I-naturalist is yet another challenging dataset, with 10K classes, diverse image quality and significant class imbalance. Transformer with FTFI provides 1$\%$ accuracy improvement over its regular low-rank attention counterpart and the cosFormer. Furthermore, FTFI achieved 0.8$\%$ improvement over RF-Gate-Gaussian. The convergence of the FTFI variant is 20-23$\%$ faster than this of its regular low-rank attention counterpart, the cosFormer and RF-Gate-Gaussian.


\subsection{Video Vision Transformer}~\label{sec:vivit}
ViViT (\citep{arnab2021vivit}) is a novel architecture that adapts the Vision Transformer (ViT) for video processing. It efficiently handles the spatiotemporal dimensions of video data by factorizing the input and applying attention mechanisms across both space and time. This allows ViViT to capture complex motion patterns and long-range dependencies in videos.

Applying FTFI with a topological masking mechanism to the ViViT architecture (factorized Transformer model variant, trained from scratch, as described in \cite{arnab2021vivit}) results in a $\mathbf{0.8}\%$ absolute improvement on the Kinetics dataset (\citep{kay2017kinetics}). The experimental setup follows \cite{arnab2021vivit}.  To the best of our knowledge, this is the first application of Topological Transformers to video data.


\begin{table}[h]
\small
\centering
\caption{Hyperparameters for the different ViT models used in this paper}
\label{tab:app_vit_param}
\begin{tabular}{@{}l c c c c c c c c@{}}
\toprule
Model & Heads & Layers & Hidden Dim. & MLP Dim. & Params  & Patch Size  \\
\midrule
ViT-Base & 12 & 12 & 768 & 3072 & 86M & 16 \\
ViT-Large (16) & 24 & 16 & 1024 & 4096 & 307M & 16\\
\bottomrule
\end{tabular}
\end{table}

\begin{table}[h]
\small
\centering
\caption{Hyperparameters for Topological Transformer experiments}
\label{tab:app_mlm_param}
\begin{tabular}{@{}l c r c r @{}}
\toprule
Parameter & & Value  \\
\midrule
 Activation layer & & gelu \\ 
 Dropout prob & & $0.1$  \\
 Attention dropout prob & & $0.1$ \\
 Optimizer & & Adam \\
 Learning rate & & $10^{-3}$  \\
 Batch Size & & $4096$ \\
 Compute resources & & $8 \times 8$ TPUv3 \\
 Number of Epochs & & 300 \\
 Warmup & & 10K \\
 weight decay & & 0.1 \\
 learning schedule & & cosine decay \\
\bottomrule
\end{tabular}
\end{table}


\section{Broader Impact}~\label{sec:boader_impact}
We do believe that the potential impact of this work is significant, as providing both: (a) theoretical advancements in structural graph theory as well as (b) practical applications in (1) designing computationally efficient Transformers leveraging topological inductive priors, (2) graph classification and (3) interpolation on manifolds. The core problem of fast multiplication with $f$-distance matrices plays an important role in various fields: physical sciences, chemistry, and network sciences. Our main contributions are algorithmic, with no clear negative side effects. While used in the context of Transformers, they should be though applied cautiously due to the nontrivial carbon emission footprint associated with training large Transformer models.

\section{Limitations}~\label{sec:limitations}
Currently, FTFI can be applied on general graphs via certain classes of trees defined on these graphs (e.g. spanning trees), with low-distortion trees being more preferable. It would be interesting to see whether the main concepts used in the FTFI algorithm (such as the theory of balanced separators) can be directly incorporated into efficient and exact algorithms operating on general graphs (or general sparse graphs that appear in most machine learning applications). Determining general conditions on the classes of graphs and functions $f$ under consideration that are sufficient for exact sub-quadratic time integration is yet another important problem for future work.


\clearpage
\section*{NeurIPS Paper Checklist}

\begin{enumerate}

\item {\bf Claims}
    \item[] Question: Do the main claims made in the abstract and introduction accurately reflect the paper's contributions and scope?
    \item[] Answer: \answerYes{} 
    \item[] Justification: We give detailed explanations of our contributions in the introduction (page 2). 
    \item[] Guidelines:
    \begin{itemize}
        \item The answer NA means that the abstract and introduction do not include the claims made in the paper.
        \item The abstract and/or introduction should clearly state the claims made, including the contributions made in the paper and important assumptions and limitations. A No or NA answer to this question will not be perceived well by the reviewers. 
        \item The claims made should match theoretical and experimental results, and reflect how much the results can be expected to generalize to other settings. 
        \item It is fine to include aspirational goals as motivation as long as it is clear that these goals are not attained by the paper. 
    \end{itemize}

\item {\bf Limitations}
    \item[] Question: Does the paper discuss the limitations of the work performed by the authors?
    \item[] Answer: \answerYes{} 
    \item[] Justification: The limitations are clearly explained in Appendix~\ref{sec:limitations}
    \item[] Guidelines:
    \begin{itemize}
        \item The answer NA means that the paper has no limitation while the answer No means that the paper has limitations, but those are not discussed in the paper. 
        \item The authors are encouraged to create a separate "Limitations" section in their paper.
        \item The paper should point out any strong assumptions and how robust the results are to violations of these assumptions (e.g., independence assumptions, noiseless settings, model well-specification, asymptotic approximations only holding locally). The authors should reflect on how these assumptions might be violated in practice and what the implications would be.
        \item The authors should reflect on the scope of the claims made, e.g., if the approach was only tested on a few datasets or with a few runs. In general, empirical results often depend on implicit assumptions, which should be articulated.
        \item The authors should reflect on the factors that influence the performance of the approach. For example, a facial recognition algorithm may perform poorly when image resolution is low or images are taken in low lighting. Or a speech-to-text system might not be used reliably to provide closed captions for online lectures because it fails to handle technical jargon.
        \item The authors should discuss the computational efficiency of the proposed algorithms and how they scale with dataset size.
        \item If applicable, the authors should discuss possible limitations of their approach to address problems of privacy and fairness.
        \item While the authors might fear that complete honesty about limitations might be used by reviewers as grounds for rejection, a worse outcome might be that reviewers discover limitations that aren't acknowledged in the paper. The authors should use their best judgment and recognize that individual actions in favor of transparency play an important role in developing norms that preserve the integrity of the community. Reviewers will be specifically instructed to not penalize honesty concerning limitations.
    \end{itemize}

\item {\bf Theory Assumptions and Proofs}
    \item[] Question: For each theoretical result, does the paper provide the full set of assumptions and a complete (and correct) proof?
    \item[] Answer: \answerYes{} 
    \item[] Justification: We introduce the notion of our algorithm Fast Tree Field Integrator in section~\ref{sec:ftfi}. We describe the main algorithm in detail and introduce the technical (theoretical) results. The proofs of these results can be found in Appendix~\ref{sec:theory_appendix}.
    \item[] Guidelines:
    \begin{itemize}
        \item The answer NA means that the paper does not include theoretical results. 
        \item All the theorems, formulas, and proofs in the paper should be numbered and cross-referenced.
        \item All assumptions should be clearly stated or referenced in the statement of any theorems.
        \item The proofs can either appear in the main paper or the supplemental material, but if they appear in the supplemental material, the authors are encouraged to provide a short proof sketch to provide intuition. 
        \item Inversely, any informal proof provided in the core of the paper should be complemented by formal proofs provided in appendix or supplemental material.
        \item Theorems and Lemmas that the proof relies upon should be properly referenced. 
    \end{itemize}

    \item {\bf Experimental Result Reproducibility}
    \item[] Question: Does the paper fully disclose all the information needed to reproduce the main experimental results of the paper to the extent that it affects the main claims and/or conclusions of the paper (regardless of whether the code and data are provided or not)?
    \item[] Answer: \answerYes{} 
    \item[] Justification: Training details to replicate each experiment are in the Appendix~\ref{sec:expt_appendix}. 
    \item[] Guidelines:
    \begin{itemize}
        \item The answer NA means that the paper does not include experiments.
        \item If the paper includes experiments, a No answer to this question will not be perceived well by the reviewers: Making the paper reproducible is important, regardless of whether the code and data are provided or not.
        \item If the contribution is a dataset and/or model, the authors should describe the steps taken to make their results reproducible or verifiable. 
        \item Depending on the contribution, reproducibility can be accomplished in various ways. For example, if the contribution is a novel architecture, describing the architecture fully might suffice, or if the contribution is a specific model and empirical evaluation, it may be necessary to either make it possible for others to replicate the model with the same dataset, or provide access to the model. In general. releasing code and data is often one good way to accomplish this, but reproducibility can also be provided via detailed instructions for how to replicate the results, access to a hosted model (e.g., in the case of a large language model), releasing of a model checkpoint, or other means that are appropriate to the research performed.
        \item While NeurIPS does not require releasing code, the conference does require all submissions to provide some reasonable avenue for reproducibility, which may depend on the nature of the contribution. For example
        \begin{enumerate}
            \item If the contribution is primarily a new algorithm, the paper should make it clear how to reproduce that algorithm.
            \item If the contribution is primarily a new model architecture, the paper should describe the architecture clearly and fully.
            \item If the contribution is a new model (e.g., a large language model), then there should either be a way to access this model for reproducing the results or a way to reproduce the model (e.g., with an open-source dataset or instructions for how to construct the dataset).
            \item We recognize that reproducibility may be tricky in some cases, in which case authors are welcome to describe the particular way they provide for reproducibility. In the case of closed-source models, it may be that access to the model is limited in some way (e.g., to registered users), but it should be possible for other researchers to have some path to reproducing or verifying the results.
        \end{enumerate}
    \end{itemize}

\item {\bf Open access to data and code}
    \item[] Question: Does the paper provide open access to the data and code, with sufficient instructions to faithfully reproduce the main experimental results, as described in supplemental material?
    \item[] Answer: \answerYes{} 
    \item[] Justification: We provide the code as well as details to run our experiments in Appendix~\ref{sec:expt_appendix}.
    \item[] Guidelines:
    \begin{itemize}
        \item The answer NA means that paper does not include experiments requiring code.
        \item Please see the NeurIPS code and data submission guidelines (\url{https://nips.cc/public/guides/CodeSubmissionPolicy}) for more details.
        \item While we encourage the release of code and data, we understand that this might not be possible, so “No” is an acceptable answer. Papers cannot be rejected simply for not including code, unless this is central to the contribution (e.g., for a new open-source benchmark).
        \item The instructions should contain the exact command and environment needed to run to reproduce the results. See the NeurIPS code and data submission guidelines (\url{https://nips.cc/public/guides/CodeSubmissionPolicy}) for more details.
        \item The authors should provide instructions on data access and preparation, including how to access the raw data, preprocessed data, intermediate data, and generated data, etc.
        \item The authors should provide scripts to reproduce all experimental results for the new proposed method and baselines. If only a subset of experiments are reproducible, they should state which ones are omitted from the script and why.
        \item At submission time, to preserve anonymity, the authors should release anonymized versions (if applicable).
        \item Providing as much information as possible in supplemental material (appended to the paper) is recommended, but including URLs to data and code is permitted.
    \end{itemize}

\item {\bf Experimental Setting/Details}
    \item[] Question: Does the paper specify all the training and test details (e.g., data splits, hyperparameters, how they were chosen, type of optimizer, etc.) necessary to understand the results?
    \item[] Answer: \answerYes{} 
    \item[] Justification: All details are presented in Sec~\ref{sec:expt_main} and Appendix~\ref{sec:expt_appendix}.
    \item[] Guidelines:
    \begin{itemize}
        \item The answer NA means that the paper does not include experiments.
        \item The experimental setting should be presented in the core of the paper to a level of detail that is necessary to appreciate the results and make sense of them.
        \item The full details can be provided either with the code, in appendix, or as supplemental material.
    \end{itemize}

\item {\bf Experiment Statistical Significance}
    \item[] Question: Does the paper report error bars suitably and correctly defined or other appropriate information about the statistical significance of the experiments?
    \item[] Answer: \answerYes{} 
    \item[] Justification: All experiments in the paper except for the ones using large Transformer models have been run multiple times using various random seeds and we report the relevant statistics. The experiments using Transformers are too expensive to run multiple times as the experiments are run on a huge dataset like ImageNet.
    \item[] Guidelines:
    \begin{itemize}
        \item The answer NA means that the paper does not include experiments.
        \item The authors should answer "Yes" if the results are accompanied by error bars, confidence intervals, or statistical significance tests, at least for the experiments that support the main claims of the paper.
        \item The factors of variability that the error bars are capturing should be clearly stated (for example, train/test split, initialization, random drawing of some parameter, or overall run with given experimental conditions).
        \item The method for calculating the error bars should be explained (closed form formula, call to a library function, bootstrap, etc.)
        \item The assumptions made should be given (e.g., Normally distributed errors).
        \item It should be clear whether the error bar is the standard deviation or the standard error of the mean.
        \item It is OK to report 1-sigma error bars, but one should state it. The authors should preferably report a 2-sigma error bar than state that they have a 96\% CI, if the hypothesis of Normality of errors is not verified.
        \item For asymmetric distributions, the authors should be careful not to show in tables or figures symmetric error bars that would yield results that are out of range (e.g. negative error rates).
        \item If error bars are reported in tables or plots, The authors should explain in the text how they were calculated and reference the corresponding figures or tables in the text.
    \end{itemize}

\item {\bf Experiments Compute Resources}
    \item[] Question: For each experiment, does the paper provide sufficient information on the computer resources (type of compute workers, memory, time of execution) needed to reproduce the experiments?
    \item[] Answer: \answerYes{} 
    \item[] Justification: We report the compute resources used in Appendix~\ref{sec:expt_appendix}.
    \item[] Guidelines:
    \begin{itemize}
        \item The answer NA means that the paper does not include experiments.
        \item The paper should indicate the type of compute workers CPU or GPU, internal cluster, or cloud provider, including relevant memory and storage.
        \item The paper should provide the amount of compute required for each of the individual experimental runs as well as estimate the total compute. 
        \item The paper should disclose whether the full research project required more compute than the experiments reported in the paper (e.g., preliminary or failed experiments that didn't make it into the paper). 
    \end{itemize}
    
\item {\bf Code Of Ethics}
    \item[] Question: Does the research conducted in the paper conform, in every respect, with the NeurIPS Code of Ethics \url{https://neurips.cc/public/EthicsGuidelines}?
    \item[] Answer: \answerYes{} 
    \item[] Justification: All authors have reviewed the NeurIPS code of ethics and the research conform to the code. 
    \item[] Guidelines:
    \begin{itemize}
        \item The answer NA means that the authors have not reviewed the NeurIPS Code of Ethics.
        \item If the authors answer No, they should explain the special circumstances that require a deviation from the Code of Ethics.
        \item The authors should make sure to preserve anonymity (e.g., if there is a special consideration due to laws or regulations in their jurisdiction).
    \end{itemize}

\item {\bf Broader Impacts}
    \item[] Question: Does the paper discuss both potential positive societal impacts and negative societal impacts of the work performed?
    \item[] Answer: \answerYes{} 
    \item[] Justification: The broader impacts of our work is detailed in Appendix~\ref{sec:boader_impact}.
    \item[] Guidelines:
    \begin{itemize}
        \item The answer NA means that there is no societal impact of the work performed.
        \item If the authors answer NA or No, they should explain why their work has no societal impact or why the paper does not address societal impact.
        \item Examples of negative societal impacts include potential malicious or unintended uses (e.g., disinformation, generating fake profiles, surveillance), fairness considerations (e.g., deployment of technologies that could make decisions that unfairly impact specific groups), privacy considerations, and security considerations.
        \item The conference expects that many papers will be foundational research and not tied to particular applications, let alone deployments. However, if there is a direct path to any negative applications, the authors should point it out. For example, it is legitimate to point out that an improvement in the quality of generative models could be used to generate deepfakes for disinformation. On the other hand, it is not needed to point out that a generic algorithm for optimizing neural networks could enable people to train models that generate Deepfakes faster.
        \item The authors should consider possible harms that could arise when the technology is being used as intended and functioning correctly, harms that could arise when the technology is being used as intended but gives incorrect results, and harms following from (intentional or unintentional) misuse of the technology.
        \item If there are negative societal impacts, the authors could also discuss possible mitigation strategies (e.g., gated release of models, providing defenses in addition to attacks, mechanisms for monitoring misuse, mechanisms to monitor how a system learns from feedback over time, improving the efficiency and accessibility of ML).
    \end{itemize}
    
\item {\bf Safeguards}
    \item[] Question: Does the paper describe safeguards that have been put in place for responsible release of data or models that have a high risk for misuse (e.g., pretrained language models, image generators, or scraped datasets)?
    \item[] Answer: \answerNA{} 
    \item[] Justification: Our paper is theoretical in nature and we are not releasing any new models or data.
    \item[] Guidelines:
    \begin{itemize}
        \item The answer NA means that the paper poses no such risks.
        \item Released models that have a high risk for misuse or dual-use should be released with necessary safeguards to allow for controlled use of the model, for example by requiring that users adhere to usage guidelines or restrictions to access the model or implementing safety filters. 
        \item Datasets that have been scraped from the Internet could pose safety risks. The authors should describe how they avoided releasing unsafe images.
        \item We recognize that providing effective safeguards is challenging, and many papers do not require this, but we encourage authors to take this into account and make a best faith effort.
    \end{itemize}

\item {\bf Licenses for existing assets}
    \item[] Question: Are the creators or original owners of assets (e.g., code, data, models), used in the paper, properly credited and are the license and terms of use explicitly mentioned and properly respected?
    \item[] Answer: \answerYes{} 
    \item[] Justification: We have properly cited all the papers that introduced various algorithms and data that are used in this work.
    \item[] Guidelines:
    \begin{itemize}
        \item The answer NA means that the paper does not use existing assets.
        \item The authors should cite the original paper that produced the code package or dataset.
        \item The authors should state which version of the asset is used and, if possible, include a URL.
        \item The name of the license (e.g., CC-BY 4.0) should be included for each asset.
        \item For scraped data from a particular source (e.g., website), the copyright and terms of service of that source should be provided.
        \item If assets are released, the license, copyright information, and terms of use in the package should be provided. For popular datasets, \url{paperswithcode.com/datasets} has curated licenses for some datasets. Their licensing guide can help determine the license of a dataset.
        \item For existing datasets that are re-packaged, both the original license and the license of the derived asset (if it has changed) should be provided.
        \item If this information is not available online, the authors are encouraged to reach out to the asset's creators.
    \end{itemize}

\item {\bf New Assets}
    \item[] Question: Are new assets introduced in the paper well documented and is the documentation provided alongside the assets?
    \item[] Answer: \answerYes{} 
    \item[] Justification: We release the code for the main algorithm. The usage is detailed in the anonymous github repo. 
    \item[] Guidelines:
    \begin{itemize}
        \item The answer NA means that the paper does not release new assets.
        \item Researchers should communicate the details of the dataset/code/model as part of their submissions via structured templates. This includes details about training, license, limitations, etc. 
        \item The paper should discuss whether and how consent was obtained from people whose asset is used.
        \item At submission time, remember to anonymize your assets (if applicable). You can either create an anonymized URL or include an anonymized zip file.
    \end{itemize}

\item {\bf Crowdsourcing and Research with Human Subjects}
    \item[] Question: For crowdsourcing experiments and research with human subjects, does the paper include the full text of instructions given to participants and screenshots, if applicable, as well as details about compensation (if any)? 
    \item[] Answer: \answerNA{} 
    \item[] Justification: We do not conduct any research that involves crowd sourcing or with human subjects. 
    \item[] Guidelines:
    \begin{itemize}
        \item The answer NA means that the paper does not involve crowdsourcing nor research with human subjects.
        \item Including this information in the supplemental material is fine, but if the main contribution of the paper involves human subjects, then as much detail as possible should be included in the main paper. 
        \item According to the NeurIPS Code of Ethics, workers involved in data collection, curation, or other labor should be paid at least the minimum wage in the country of the data collector. 
    \end{itemize}

\item {\bf Institutional Review Board (IRB) Approvals or Equivalent for Research with Human Subjects}
    \item[] Question: Does the paper describe potential risks incurred by study participants, whether such risks were disclosed to the subjects, and whether Institutional Review Board (IRB) approvals (or an equivalent approval/review based on the requirements of your country or institution) were obtained?
    \item[] Answer: \answerNA{} 
    \item[] Justification: Our paper does not involve crowd sourcing nor research with human subjects.
    \item[] Guidelines:
    \begin{itemize}
        \item The answer NA means that the paper does not involve crowdsourcing nor research with human subjects.
        \item Depending on the country in which research is conducted, IRB approval (or equivalent) may be required for any human subjects research. If you obtained IRB approval, you should clearly state this in the paper. 
        \item We recognize that the procedures for this may vary significantly between institutions and locations, and we expect authors to adhere to the NeurIPS Code of Ethics and the guidelines for their institution. 
        \item For initial submissions, do not include any information that would break anonymity (if applicable), such as the institution conducting the review.
    \end{itemize}

\end{enumerate}

\end{document}